\icmltitlerunning{MONK --  Outlier-Robust Mean Embedding Estimation by Median-of-Means}
\newcommand{\pscal}[2]{\left\langle #1, #2 \right\rangle}
\def\rme{\mathrm{e}}
\newcommand{\R}{\mathbb{R}}
\renewcommand{\P}{\mathbb{P}}
\newcommand{\E}{\mathbb{E}}
\newcommand{\Z}{\mathbb{Z}}
\newcommand{\norm}[1]{\left\|#1\right\|}
\newcommand{\cro}[1]{\left[#1\right]}
\newcommand{\set}[1]{\left\{#1\right\}}
\newcommand{\psh}[2]{\left\langle#1,#2\right\rangle}
\newcommand{\MOM}[2]{\text{MON}_{#1}\cro{#2}}
\newcommand{\X}{\mathscr{X}} %domain of the kernel
\renewcommand{\k}{K} %kernel
\newcommand{\Hk}{\mathscr{H}_{\k}} %RKHS determined by kernel k
\newcommand{\tb}{\textbf} %text bold
\renewcommand{\d}{\mathrm{d}} %dx
\renewcommand{\xi}{x} %i^th sample
\newcommand{\med}[1]{\underset{#1}{\text{med}}}
\newcommand{\medi}[1]{\text{med}_{#1}} %=inline \med
\renewcommand{\L}{\mathscr{L}} %linear operator
\newcommand{\G}{\mathscr{G}} %sep Caratheodory family
\newcommand{\I}{\mathbb{I}} %indicator function
\newcommand{\Q}{\mathbb{Q}} %prob measure-2
\renewcommand{\b}{\mathbf} %bold maths
\newcommand{\MMD}{\operatorname{MMD}}
\newcommand{\MMDhat}{\widehat{\operatorname{MMD}}}
\newcommand{\MMDQ}{\MMD_{Q}}
\newcommand{\MMDQhat}{\widehat{\MMD}_{Q}}
\renewcommand{\O}{\mathscr{O}}%ordo
\DeclareMathOperator{\argmin}{\rm{argmin}}
\DeclareMathOperator{\argmax}{\textrm{argmax}}
\DeclareMathOperator{\Tr}{\operatorname{Tr}}  
\newtheorem{theorem}{Theorem}
\newtheorem{lemma}[theorem]{Lemma}
\begin{document}

\twocolumn[
\icmltitle{MONK --  Outlier-Robust Mean Embedding Estimation by Median-of-Means}

% It is OKAY to include author information, even for blind
% submissions: the style file will automatically remove it for you
% unless you've provided the [accepted] option to the icml2019
% package.

% List of affiliations: The first argument should be a (short)
% identifier you will use later to specify author affiliations
% Academic affiliations should list Department, University, City, Region, Country
% Industry affiliations should list Company, City, Region, Country

% You can specify symbols, otherwise they are numbered in order.
% Ideally, you should not use this facility. Affiliations will be numbered
% in order of appearance and this is the preferred way.
\icmlsetsymbol{equal}{*}

\begin{icmlauthorlist}
\icmlauthor{Matthieu Lerasle}{T,M}
\icmlauthor{Zolt{\'a}n Szab{\'o}}{Z}
\icmlauthor{Timoth{\'e}e Mathieu}{T}
\icmlauthor{Guillaume Lecu{\'e}}{G}
\end{icmlauthorlist}

\icmlaffiliation{M}{CNRS, Universit{\'e} Paris Saclay, France}
\icmlaffiliation{Z}{CMAP, {\'E}cole Polytechnique, Palaiseau, France}
\icmlaffiliation{T}{Laboratoire de Math{\'e}matiques d'Orsay, Univ.\ Paris-Sud, France}
\icmlaffiliation{G}{CREST ENSAE ParisTech, France}

\icmlcorrespondingauthor{Matthieu Lerasle}{matthieu.lerasle@math.u-psud.fr}
\icmlcorrespondingauthor{Zolt{\'a}n Szab{\'o}}{zoltan.szabo@polytechnique.edu}

% You may provide any keywords that you
% find helpful for describing your paper; these are used to populate
% the "keywords" metadata in the PDF but will not be shown in the document
\icmlkeywords{mean embedding, maximum mean discrepancy, adversarial learning, median-of-means, robust statistics, reproducing kernel Hilbert space}

\vskip 0.3in
]

% this must go after the closing bracket ] following \twocolumn[ ...

% This command actually creates the footnote in the first column
% listing the affiliations and the copyright notice.
% The command takes one argument, which is text to display at the start of the footnote.
% The \icmlEqualContribution command is standard text for equal contribution.
% Remove it (just {}) if you do not need this facility.

\printAffiliationsAndNotice{}  % leave blank if no need to mention equal contribution
%\printAffiliationsAndNotice{\icmlEqualContribution} % otherwise use the standard text.

\begin{abstract}
Mean embeddings provide an extremely flexible and powerful tool in machine learning and statistics to represent probability distributions and define
a semi-metric (MMD, maximum mean discrepancy; also called N-distance or energy distance), with numerous successful applications.
The representation is constructed as the expectation of the feature map defined by a kernel. As a mean, its classical empirical estimator, however, can be arbitrary severely affected even by a single outlier
in case of unbounded features. To the best of our knowledge, unfortunately even the consistency of the existing few techniques trying to alleviate this serious sensitivity bottleneck is unknown. In this paper, we show how the
recently emerged principle of median-of-means can be used to design estimators for kernel mean embedding and MMD with excessive resistance properties to outliers, and
optimal sub-Gaussian deviation bounds under mild assumptions.
\end{abstract}

\section{Introduction}
Kernel methods \cite{aronszajn50theory} form the backbone of a tremendous number of successful applications in machine learning thanks to their power in capturing complex relations 
\cite{scholkopf02learning,steinwart08support}.
The main idea behind these techniques is to map the data points to a feature space (RKHS, reproducing kernel Hilbert space) determined by the kernel, and apply linear methods in the feature space,
without the need to explicitly compute the map.

One crucial component contributing to this flexibility and efficiency (beyond the solid theoretical foundations) is the versatility of domains where kernels exist; examples include
trees \cite{collins01convolution,kashima02kernels},  time series \cite{cuturi11fast}, strings \cite{lodhi02text}, mixture models, hidden Markov models or linear dynamical systems \cite{jebara04probability},
sets \cite{haussler99convolution,gartner02multi}, fuzzy domains \cite{guevara17cross}, distributions \cite{hein05hilbertian,martins09nonextensive,muandet12learning}, groups \cite{cuturi05semigroup} such as
specific constructions on permutations \cite{jiao16kendall}, or graphs \cite{vishwanathan10graph,kondor16multiscale}.

Given a kernel-enriched domain $(\X,\k)$ one can represent probability distributions on $\X$ 
as a mean
\begin{align*}
  \mu_\P &= \int_{\X} \varphi(x) \d \P(x) \in \Hk, \quad \varphi(x) := \k(\cdot,x),
\end{align*}
which is a point in the RKHS determined by $\k$. This representation called \emph{mean embedding} \cite{berlinet04reproducing,smola07hilbert}
induces  a semi-metric\footnote{\citet{fukumizu08kernel,sriperumbudur10hilbert} provide conditions when MMD is a metric, i.e.\ $\mu$ is injective.} on
distributions called maximum mean discrepancy (MMD) \cite{smola07hilbert,gretton12kernel}
\begin{align}
    \text{MMD}(\P,\Q) = \left\|\mu_\P - \mu_\Q\right\|_{\Hk}. \label{eq:MMD}
\end{align}
With appropriate choice of the kernel, classical integral transforms widely used in probability theory and statistics can be recovered by
$\mu_\P$; for example, if $\X$ equipped with the scalar product $\pscal{\cdot}{\cdot}$ is a Hilbert space, the kernel $\k(x,y) = \rme^{\pscal{x}{y}}$ gives the moment-generating function,  $\k(x,y) = \rme^{\gamma \left\|x-y\right\|_2^2}$ ($\gamma>0$) the Weierstrass transform.
As it has been shown \cite{sejdinovic13equivalence} energy distance \cite{baringhaus04new,szekely04testing,szekely05new}---also known as
N-distance \cite{zinger92characterization,klebanov05n-distance} in the statistical literature---coincides with MMD.

Mean embedding and maximum mean discrepancy have been applied successfully, in kernel Bayesian inference \cite{song11kernel,fukumizu13kernel}, approximate Bayesian computation \cite{park16k2abc},
model criticism \cite{loyd14automatic,kim16examples}, two-sample \cite{baringhaus04new,szekely04testing,szekely05new,harchaoui07testing,gretton12kernel} or its differential private variant \cite{raj18differentially}, 
independence \cite{gretton08kernel,pfister17kernel} 
and goodness-of-fit testing \cite{jitkrittum17linear,balasubramanian17optimality}, domain adaptation \cite{zhang13domain} and generalization \cite{blanchard17domain}, change-point detection \cite{harchaoui07retrospective}, 
probabilistic programming \cite{scholkopf15computing}, post selection inference \cite{yamada18post},
distribution classification \cite{muandet12learning,zaheer17deep} and regression \cite{szabo16learning,law18bayesian}, causal discovery \cite{mooij16distinguishing, pfister17kernel}, generative adversarial networks 
\cite{dziugaite15training,li15generative,binkowski18demistifying}, understanding the dynamics of complex dynamical systems \cite{klus18eigendecompositions,klus19kernel}, or topological data analysis \cite{kusano16persistence}, among many others; \citet{maundet17kernel} provide a recent in-depth review on the topic.

Crucial to the success of these applications is the efficient and robust approximation of the mean embedding and MMD.
As a mean, the most natural approach to estimate $\mu_\P$ is the empirical average. Plugging this estimate into Eq.~\eqref{eq:MMD} produces directly an approximation of MMD, which can also be made unbiased
(by a small correction) or approximated recursively. These are the V-statistic, U-statistic and online approaches \cite{gretton12kernel}.
Kernel mean shrinkage estimators \cite{muandet16kernel} represent an other successful direction: they improve the efficiency of the mean embedding estimation by taking into account the Stein phenomenon.
Minimax results have recently been established: the optimal rate of mean embedding estimation  given $N$ samples from $\P$ is $N^{-1/2}$ 
\cite{tolstikhin17minimax} for
discrete measures and the class of measures with infinitely differentiable density when $\k$ is a continuous, shift-invariant kernel on $\X = \R^d$. For MMD,
using $N_1$ and $N_2$ samples from $\P$ and $\Q$, it is  $N_1^{-1/2} + N_2^{-1/2}$ \cite{tolstikhin16minimax} in case of radial universal kernels defined on $\X = \R^d$.

A critical property of an estimator is its robustness to contaminated data, outliers which are omnipresent in currently available massive and heterogenous datasets.
To the best of our knowledge, systematically \emph{designing outlier-robust mean embedding and MMD estimators} has hardly been touched in the literature; this is the focus of the current paper.
The issue is particularly serious in case of unbounded kernels when for example even a single outlier can ruin completely a classical empirical average based estimator. Examples for
unbounded kernels are the exponential kernel (see the example above about moment-generating functions), polynomial kernel, string, time series or graph kernels.

Existing related techniques comprise robust kernel density estimation (KDE) \cite{kim12robust}: the authors elegantly combine ideas from the KDE and M-estimator literature to
arrive at a robust KDE estimate of density functions. They assume that the underlying smoothing kernels\footnote{Smoothing kernels extensively studied in the non-parametric statistical literature \cite{gyorfi2002} are
assumed to be non-negative functions integrating to one.} are shift-invariant on $\X=\R^d$ and reproducing, and interpret KDE as a weighted mean in $\Hk$. The idea has been (i) adapted to construct
outlier-robust covariance operators in RKHSs in the context of kernel canonical correlation analysis
\cite{alam18influence}, and (ii) relaxed to general Hilbert spaces \cite{sinova18m-estimators}. Unfortunately, the consistency of the investigated empirical M-estimators is unknown, except
for finite-dimensional feature maps \cite{sinova18m-estimators}, or as density function estimators \cite{vandermeulen13consistency}.

To achieve our goal, we leverage the idea of Median-Of-meaNs (MON). Intuitively, MONs replace the linear operation of expectation with the median of
averages taken over non-overlapping blocks of the data, in order to get a robust estimate thanks to the median step.
MONs date back to \citet{MR855970, MR1688610,MR702836} for the estimation of the mean of real-valued random variables.
Their concentration properties have been recently studied by \citet{MR3576558, Minsker-Strawn2017} following the approach of \citet{MR3052407} for M-estimators. These studies focusing on the 
estimation of the mean of real-valued random variables are important as they can be used to tackle more general prediction problems in learning theory via the classical empirical risk minimization approach 
\cite{MR1719582} or by more sophisticated approach such as the minmax procedure \cite{MR2906886}.

In parallel to the minmax approach, there have been several attempts to extend the usage of MON estimators from $\R$ to more
general settings. For example, \citet{MR3378468, Minsker-Strawn2017} consider the problem of estimating the mean of a Banach-space valued random variable using ``geometrical'' MONs.
The estimators constructed by \citet{MR3378468, Minsker-Strawn2017} are computationally tractable  but the deviation bounds are suboptimal compared to those one can prove for the empirical mean under sub-Gaussian assumptions.
In regression problems, \citet{LugosiMendelson2016,LL1} proposed to combine the classical MON estimators on $\R$ in a ``test'' procedure that
can be seen as a Le Cam test estimator \cite{MR0334381}. The achievement in \cite{LugosiMendelson2016, LL1} is that they were able to obtain optimal deviation bounds for the resulting estimator using the powerful 
so-called small-ball method of \citet{MR3431642,Shahar-COLT}.
This approach was then extended to mean estimation $\R^d$ by \citet{LugosiMendelson2017-2} providing the first rate-optimal sub-Gaussian deviation bounds under minimal $L^2$-assumptions.
The constants of \citet{LugosiMendelson2016, LL1,LugosiMendelson2017-2} have been improved by \citet{Catoni-Giullini2017} for
the estimation of the mean in $\R^d$ under $L^4$-moment assumption and in least-squares regression under $L^4/L^2$-condition that is stronger than the small-ball assumption used by \citet{LugosiMendelson2016, LL1}.
Unfortunately, these estimators are computationally intractable; their risk bounds however serve as an important baseline for computable estimators such as the minmax MON estimators in regression \cite{LL2}.

Motivated by the computational intractability of the tournament procedure underlying the first rate-optimal sub-Gaussian deviation bound holding under minimal assumptions in $\R^{d}$ \citep{LugosiMendelson2017-2}, \citet{hopkins18mean} proposed a convex relaxation with polynomial, $O(N^{24})$ complexity where $N$ denotes the sample size.
\citet{cherapanamjeri19fast} have recently designed an alternative convex relaxation requiring $O(N^4+d N^2)$ computation which is still rather restrictive for large sample size and infeasible in infinite dimension.

Our goal is to extend the theoretical insight of  \citet{LugosiMendelson2017-2}
from $\R^d$ to kernel-enriched domains. Particularly,  we prove optimal sub-Gaussian deviation bounds for MON-based mean estimators in RKHS-s which hold
under minimal second-order moment assumptions. In order to achieve this goal, we use a different (minmax \citep{MR2906886,LL2}) construction which combined with properties specific to RKHSs (the mean-reproducing property of mean embedding and the integral probability metric representation of MMD) give rise to our practical MONK procedures. Thanks to the usage of medians the MONK estimators are also robust to  contamination.

Section~\ref{sec:defs-problem-formulation} contains  definitions and problem formulation. Our main results are given in Section~\ref{sec:results}.
Implementation of the MONK estimators is the focus of Section~\ref{sec:MONK-implementation}, with numerical illustrations in Section~\ref{sec:numerical-demos}.

\section{Definitions \& Problem Formulation} \label{sec:defs-problem-formulation}
In this section, we formally introduce the goal of our paper.

\tb{Notations:}
$\Z^+$ is the set of positive integers. $[M]:=\{1,\ldots,M\}$, $u_{S} := (u_m)_{m\in S}$, $S\subseteq [M]$. For a set $S$, $|S|$ denotes its cardinality. $\E$ stands for expectation. $\medi{q\in [Q]}\left\{z_q\right\}$ is the median of the $(z_q)_{q\in[Q]}$ numbers. Let
$\X$ be a separable topological space endowed with the Borel $\sigma$-field, $x_{1:N}$ denotes a sequence of i.i.d.\ random variables on $\X$ with law $\P$ (shortly, $x_{1:N}\sim \P$).
$\k:\X \times \X \rightarrow \R$ is a continuous (reproducing) kernel on $\X$, $\Hk$ is the reproducing kernel Hilbert space associated to $\k$; $\pscal{\cdot}{\cdot}_{\k}:=\pscal{\cdot}{\cdot}_{\Hk}$, $\left\|\cdot\right\|_{\k}:=\left\|\cdot\right\|_{\Hk}$.\footnote{$\Hk$ is separable
by the separability of $\X$ and the continuity of $\k$ \citep[Lemma~4.33]{steinwart08support}. These assumptions on $\X$ and $\k$ are assumed to hold throughout the paper.} The reproducing property of the kernel means that evaluation of functions in
$\Hk$ can be represented by inner products $f(x) = \pscal{f}{\k(\cdot,x)}_{\k}$ for all $x\in \X$, $f\in \Hk$.
The mean embedding of a probability measure $\P$ is defined as
\begin{align}
 \mu_\P = \int_\X \k(\cdot,x)\d \P(x)\in\Hk, \label{eq:ME}
\end{align}
where the integral is meant in Bochner sense; $\mu_\P$ exists iff
$\int_\X \left\|\k(\cdot,x)\right\|_{\k} \d \P(x) = \int_\X \sqrt{\k(x,x)} \d \P(x) <\infty$.
It is well-known that the mean embedding has mean-reproducing property
$ \P f :=\E_{x\sim\P} f(x) = \pscal{f}{\mu_\P}_{\k}$ for all $f\in \Hk$,
and it is the unique solution of the problem:
\begin{align}
\label{eq:first-characterization-mean}
\mu_\P &= \argmin_{f\in \Hk} \int_\X  \norm{f - \k(\cdot,x)}_{\k}^2 \d \P(x) \, .
\end{align}
The solution of this task can be obtained by solving the following minmax optimization
\begin{equation}
\label{eq:second-characterization-mean}
\mu_\P =  \argmin_{f\in \Hk} \sup_{g \in \Hk} J(f,g),  
\end{equation}with
%\begin{equation*}
$J(f,g)  = \E_{x\sim \P}
\left[ \norm{f - \k(\cdot,x)}_{\k}^2 - \norm{g - \k(\cdot,x)}_{\k}^2 \right]$.
%\end{equation*}
The equivalence of \eqref{eq:first-characterization-mean} and \eqref{eq:second-characterization-mean} is obvious
since the expectation is linear. Nevertheless, this equivalence is essential in the construction of our estimators because we will below replace the expectation by a non-linear estimator of this quantity. More precisely, the unknown expectations are computed by using
the Median-of-meaN estimator (MON). Given a partition of the dataset into blocks, the MON estimator is the median of the empirical means over each block. MON estimators are naturally robust thanks to the median step.

More precisely, the procedure goes as follows. For any map $h: \X  \rightarrow \R$ and any non-empty subset $S\subseteq [N]$, denote by
$\P_S := |S|^{-1} \sum_{i\in S}\delta_{x_i}$ the empirical measure associated to the
subset $x_S$ and  $\P_S h = |S|^{-1} \sum_{i\in S}h(\xi_i)$; we will use the shorthand $\mu_{S}:= \mu_{\P_S}$.
Assume that $N\in\Z^+$ is divisible by $Q\in\Z^+$ and let $(S_q)_{q\in [Q]}$ denote a partition of $[N]$ into subsets with the same cardinality $|S_q| = N/Q$ ($\forall q \in [Q]$). The Median Of meaN (MON) is defined as
\begin{align*}
    \MOM{Q}{h}\hspace{-0.05cm} = \hspace{-0.05cm} \medi{q\in [Q]}\set{\P_{S_q}h} \hspace{-0.05cm} = \hspace{-0.05cm} \medi{q \in [Q]}\set{\pscal{h}{\mu_{S_q}}_{\k}},
\end{align*}
where assuming that $h\in \Hk$ the second equality is a consequence of the mean-reproducing property of $\mu_\P$. Specifically, in case of $Q = 1$ the MON operation reduces to the classical mean: 
$\MOM{1}{h} = N^{-1} \sum_{n=1}^N h(x_n)$.

We define the minmax MON-based estimator associated to kernel $K$ (MONK) as
\begin{align}
    \hat{\mu}_{\P,Q} &= \hat{\mu}_{\P,Q}(x_{1:N})\in\argmin_{f\in \Hk}\sup_{g\in \Hk}\tilde{J}(f,g),\label{eq:MOMK}
\end{align}where for all $f,g\in\Hk$
\begin{eqnarray*}
\lefteqn{\tilde{J}(f,g) =}\\
&&\hspace{-0.2cm}=\MOM{Q}{x \mapsto \norm{f-\k(\cdot,x)}_{\k}^2-\norm{g-\k(\cdot,x)}_{\k}^2}.
\end{eqnarray*}
When $Q=1$, since $\MOM{1}{h}$ is the empirical mean, we obtain the classical empirical mean based estimator: 
$\hat{\mu}_{\P,1}= \frac{1}{N}\sum_{n=1}^N \k(\cdot,x_n)$.

One can use the mean embedding \eqref{eq:ME} to get a semi-metric on probability measures: the
maximum mean discrepancy (MMD) of $\P$ and $\Q$ is
\begin{align*}
 \text{MMD}(\P,\Q) &:= \left\|  \mu_\P - \mu_\Q \right\|_{\k}=\sup_{f\in B_\k}\pscal{f}{\mu_\P-\mu_\Q}_{\k},
\end{align*}
where $B_\k = \{f\in \Hk: \left\|f\right\|_{\k} \le 1\}$ is the closed unit ball around the origin in $\Hk$. The second equality shows
that MMD is a specific integral probability metric \cite{muller97integral,zolotarev83probability}. Assume that we have access to $x_{1:N} \sim \P$, $y_{1:N}\sim \Q$ samples, where we assumed
the size of the two samples  to be the same for simplicity.
Denote by $\P_{S,x} := \frac{1}{|S|} \sum_{i\in S}\delta_{x_i}$ the empirical measure associated to the subset $x_S$ ($\P_{S,y}$ is defined similarly for $y$),
$\mu_{S_q,\P} := \mu_{\P_{S_q,x}}$, $\mu_{S_q,\Q} := \mu_{\P_{S_q,y}}$. We propose the following MON-based MMD
estimator
      \begin{align}\hspace*{-0.31cm}
    \MMDQhat(\P,\Q) \hspace*{-0.05cm} &=  \hspace*{-0.1cm} \sup_{f\in B_\k} \med{q\in[Q]}\left\{\pscal{f}{\mu_{S_q,\P}-\mu_{S_q,\Q}}_{\k}\right\}.\label{eq:MMD-MONK-estimator}
      \end{align}
Again, with the $Q=1$ choice, the classical V-statistic based MMD estimator \cite{gretton12kernel}  is
recovered:
 \begin{align}
  &\MMDhat(\P,\Q) = \sup_{f\in B_\k} \left[\frac{1}{N}\sum_{n \in [N]}f(x_n) - \frac{1}{N}\sum_{n \in [N]} f(y_n) \right] \nonumber\\
   &= \sqrt{\frac{1}{N^2}\sum_{i,j\in [N]} \left( K^x_{ij} +  K^y_{ij}  - 2 K^{xy}_{ij} \right)}, \label{eq:MMD-classical-estimator}
 \end{align}
where  $K^x_{ij} = \k(x_i,x_j), K^y_{ij} = \k(y_i,y_j)$ and $K^{xy}_{ij} = \k(x_i,y_j)$ for all $i,j\in[N]$. 
Changing in Eq.~\eqref{eq:MMD-classical-estimator} $\sum_{i,j\in [N]}$ to $\sum_{i,j\in [N], i \ne j}$ in case of the $K_{ij}^{x}$ and $K_{ij}^{y}$ terms gives the 
(unbiased) U-statistic based MMD estimator 
\begin{align}
  \frac{1}{N(N-1)} \sum_{\substack{i,j\in [N]\\ i\ne j}}\left(K_{ij}^x + K_{ij}^y\right) - \frac{2}{N^2}\sum_{i,j\in [N]} K_{ij}^{xy}. \label{eq:MMD-UStat}
\end{align}

Our \tb{goal} is to lay down the theoretical foundations of the $\hat{\mu}_{\P,Q}$ and  $\MMDQhat(\P,\Q)$ MONK estimators: study their finite-sample behaviour (prove optimal sub-Gaussian deviation bounds) and
establish their outlier-robustness properties.

A \tb{few additional notations} will be needed throughout the paper. $S_1\backslash S_2$ is the difference of set $S_1$ and $S_2$.  For any linear operator $A: \Hk \rightarrow \Hk$, denote by $\left\|A\right\| := \sup_{0\ne f \in \Hk} \left\|Af\right\|_{\k}/\left\|f\right\|_{\k}$ the operator norm of $A$. Let $\L(\Hk)= \left\{A: \Hk \rightarrow \Hk\text{ linear operator}: \left\|A\right\| <\infty\right\}$ be the space of bounded linear operators.
For any $A\in \L(\Hk)$, let $A^*\in \L(\Hk)$ denote the adjoint of $A$, that is the operator such that $\pscal{Af}{g}_{\k} = \pscal{f}{A^*g}_{\k}$ for all $f,g\in \Hk$.
An operator $A \in \L(\Hk)$ is called non-negative if $\pscal{Af}{f}_{\k}\ge 0$ for all $f\in \Hk$. By the separability of $\Hk$, there exists a countable orthonormal basis (ONB) $(e_i)_{i\in I}$ in $\Hk$.
$A \in \L(\Hk)$ is called  trace-class if $\left\|A \right\|_1:=\sum_{i\in I} \pscal{\left(A^*A \right)^{1/2}e_i}{e_i}_{\k}<\infty$ and in this case
$\Tr(A) := \sum_{i\in I} \pscal{Ae_i}{e_i}_{\k} < \infty$. If $A$ is non-negative and self-adjoint, then $A$ is trace class iff $\Tr(A) < \infty$; this will hold for the covariance operator ($\Sigma_{\P}$, see Eq.~\eqref{eq:covop-def}).
$A \in \L(\Hk)$ is called  Hilbert-Schmidt if $\left\|A\right\|_2^2 := \Tr\left(A^*A\right) = \sum_{i \in I} \pscal{Ae_i}{Ae_i}_{\k} < \infty$.
One can show that the definitions of trace-class and Hilbert-Schmidt operators are independent of the particular choice of the ONB $(e_i)_{i\in I}$.
Denote by $\L_1(\Hk) := \left\{A\in \L(\Hk): \left\|A\right\|_1 < \infty \right\}$ and $\L_2(\Hk) := \left\{A\in \L(\Hk): \left\|A\right\|_2 < \infty \right\}$ the class of trace-class and (Hilbert) space of Hilbert-Schmidt operators on $\Hk$, respectively.
The tensor product of $a,b\in\Hk$ is
$(a\otimes b)(c) = a  \pscal{b}{c}_{\k}, \quad (\forall c \in \Hk)$, $a\otimes b \in \L_2(\Hk)$ and $\left\|a\otimes b\right\|_2 = \left\|a\right\|_{\k} \left\|b\right\|_{\k}$.
$\L_2(\Hk)\cong  \Hk \otimes \Hk$ where the r.h.s.\ denotes the tensor product of Hilbert spaces defined as the closure of  $\big\{\sum_{i=1}^n a_i \otimes b_i: a_i,b_i\in\Hk\, (i\in[n]),\, n\in \Z^+\big\}$. Whenever $\int_{\X}  \left\|\k(\cdot,x)  \otimes \k(\cdot,x) \right\|_2 \d \P(x) = \int_\X \k(x,x) \d \P(x) < \infty$,
let $\Sigma_{\P}$ denote the covariance operator
\begin{align}
\Sigma_{\P} &= \E_{x\sim \P} \left( \left[\k(\cdot,x) - \mu_\P\right] \otimes \left[\k(\cdot,x) - \mu_\P\right] \right) \in \L_2(\Hk), \label{eq:covop-def}
\end{align}
where the expectation (integral) is again meant in Bochner sense. $\Sigma_\P$ is non-negative, self-adjoint, moreover it has covariance-reproducing property
$\pscal{f}{\Sigma_{\P}f}_{\k} = \E_{x\sim \P}[f(x) - \P f]^2$. It is known that $\left\|A\right\| \le \left\|A\right\|_2 \le \left\|A\right\|_1$.

\section{Main Results}\label{sec:results}
Below we present our main results on the MONK estimators, followed by a discussion.
We allow that  $N_c$ elements($(x_{n_j})_{j=1}^{N_c}$ ) of the sample $x_{1:N}$ are arbitrarily corrupted (In MMD
estimation $\{(x_{n_j},y_{n_j})\}_{j=1}^{N_c}$ can be contaminated). The
number of corrupted samples can be (almost) half of the number of blocks, in other words, there exists $\delta  \in \left(0,1/2\right]$ such that $N_c  \le Q(1/2-\delta)$. If the data are free from 
contaminations, then $N_c=0$ and $\delta=1/2$. Using these notations, we can prove the following optimal sub-Gaussian deviation bounds on the MONK estimators.
\begin{theorem}[Consistency \& outlier-robustness of $\hat{\mu}_{\P,Q}$]\label{thm:RB}
 Assume that $\Sigma_{\P} \in \L_1(\Hk)$.
  Then,  for any $\eta \in (0,1)$ such that $Q=72\delta^{-2} \ln\left(1/\eta\right)$ satisfies $Q\in (N_c/(1/2-\delta),N/2)$, with probability at least $1-\eta$,
  \begin{align*}
  &\left\|\hat{\mu}_{\P,Q} - \mu_\P\right\|_{\k} \\
  &\le \frac{12 \left(1+\sqrt{2}\right)}{\delta}\max\left(\sqrt{\frac{6\left\|\Sigma_{\P} \right\| \ln\left(1/\eta\right)}{\delta N}},2\sqrt{\frac{\Tr{(\Sigma_\P)}}{N}}\right).
      \end{align*}
\end{theorem}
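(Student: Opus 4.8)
The plan is to exploit the quadratic structure of the loss together with the minmax definition of $\hat{\mu}_{\P,Q}$, collapsing the whole statement onto a single \emph{uniform deviation bound for the median process}
\[
M(w) := \medi{q\in[Q]}\set{\pscal{w}{\mu_{S_q}-\mu_\P}_{\k}}, \qquad w\in\Hk .
\]
Expanding $\norm{f-\k(\cdot,x)}_{\k}^2 = \norm{f}_{\k}^2 - 2f(x) + \k(x,x)$ and using the mean-reproducing property, one sees that each block average $\P_{S_q}\cro{\norm{f-\k(\cdot,x)}_{\k}^2 - \norm{g-\k(\cdot,x)}_{\k}^2}$ is affine in $\mu_{S_q}$. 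Since the median is positively homogeneous and translation-equivariant, writing $v = g-\mu_\P$ gives $\tilde J(\mu_\P,g) = -\norm{v}_{\k}^2 + 2M(v)$, and writing $u = f-\mu_\P$ gives $\tilde J(f,\mu_\P) = \norm{u}_{\k}^2 - 2M(u)$. Hence $\sup_{g}\tilde J(\mu_\P,g) = \sup_{v}\cro{-\norm{v}_{\k}^2 + 2M(v)}$, while $\sup_g \tilde J(f,g) \ge \tilde J(f,\mu_\P)$ by taking $g=\mu_\P$.

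The deterministic minmax step is then as follows. Fix $r>0$ and work on the event $\mathcal E$ that $M(w)\le r\norm{w}_{\k}$ for every $w\in\Hk$. Using $M(v)\le r\norm{v}_{\k}$ we get $\sup_v\cro{-\norm{v}_{\k}^2 + 2M(v)}\le r^2$, and using $M(u)\le r\norm{u}_{\k}$ for $u = \hat{\mu}_{\P,Q}-\mu_\P$ together with the optimality $\sup_g\tilde J(\hat{\mu}_{\P,Q},g)\le\sup_g\tilde J(\mu_\P,g)$ yields
\[
\norm{u}_{\k}^2 - 2r\norm{u}_{\k} \;\le\; r^2 ,
\]
so that $\norm{\hat{\mu}_{\P,Q}-\mu_\P}_{\k}\le (1+\sqrt2)\,r$ by solving the quadratic. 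It therefore suffices to establish $\mathcal E$ with $r = \tfrac{12}{\delta}\max\!\big(\sqrt{6\norm{\Sigma_\P}\ln(1/\eta)/(\delta N)},\,2\sqrt{\Tr(\Sigma_\P)/N}\big)$ and probability at least $1-\eta$.

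For the probabilistic core, by positive homogeneity of $M$ it is enough to control $\sup_{\norm{w}_{\k}=1}M(w)$. For a unit $w$, $M(w)>r$ forces more than half of the blocks to satisfy $\pscal{w}{\mu_{S_q}-\mu_\P}_{\k}>r$; at most $N_c\le Q(1/2-\delta)$ of these can be contaminated, so it suffices to show that the number of \emph{uncorrupted} exceeding blocks stays below $Q/2-N_c\ (\ge \delta Q)$ uniformly over the sphere. I dominate this count by $G(w):=\sum_{q\text{ clean}}\varrho\big(\pscal{w}{\mu_{S_q}-\mu_\P}_{\k}\big)$ for a $2/r$-Lipschitz surrogate $\varrho$ sandwiched between $\I\{\cdot>r\}$ and $\I\{\cdot>r/2\}$, and bound $\sup_{\norm{w}_{\k}=1}G(w)$ through three ingredients: (i) Chebyshev per clean block, using $\Var(\pscal{w}{\mu_{S_q}}_{\k}) = \tfrac{Q}{N}\pscal{w}{\Sigma_\P w}_{\k}\le \tfrac{Q}{N}\norm{\Sigma_\P}$ (covariance-reproducing property), bounds $\E G(w)$ and produces the $\sqrt{\norm{\Sigma_\P}\ln(1/\eta)/(\delta N)}$ term; (ii) symmetrization followed by the Ledoux--Talagrand contraction reduces the centered supremum to the Rademacher average $\E\norm{\sum_q \varepsilon_q(\mu_{S_q}-\mu_\P)}_{\k}\le \big(\sum_q \E\norm{\mu_{S_q}-\mu_\P}_{\k}^2\big)^{1/2} = Q\sqrt{\Tr(\Sigma_\P)/N}$, giving the $\sqrt{\Tr(\Sigma_\P)/N}$ term; (iii) McDiarmid's inequality (each block changes $G$ by at most $1$) yields the fluctuation $\sqrt{Q\ln(1/\eta)}$, which the calibration $Q=72\delta^{-2}\ln(1/\eta)$ converts into a term of order $\delta Q$. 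Choosing the constants as above makes the three contributions sum to strictly less than $\delta Q\le Q/2-N_c$, which is exactly $\mathcal E$.

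The main obstacle is ingredient (ii): obtaining \emph{uniform} control, over the unit sphere of a possibly infinite-dimensional $\Hk$, of a non-smooth median/indicator count. Non-smoothness is circumvented by the Lipschitz surrogate and contraction, while infinite-dimensionality is tamed only through the assumption $\Sigma_\P\in\L_1(\Hk)$, which replaces the usual dimension factor by the finite quantity $\sqrt{\Tr(\Sigma_\P)}$ in the Rademacher bound. The remaining work—the delicate bookkeeping of the numerical constants $6$, $12$, $72$ so that the three terms fall below $Q/2-N_c$, and the corruption accounting that at most $N_c$ blocks are spoiled—is routine given this structure.
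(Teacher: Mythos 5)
Your proposal is correct and follows essentially the same route as the paper: the quadratic expansion plus the minmax optimality reduces the problem to the uniform median deviation bound $r_{Q,N}=\sup_{\norm{w}_{\k}\le 1}M(w)$ with the $(1+\sqrt{2})$ factor from the quadratic inequality, and the probabilistic core (block-counting with corruption accounting, Lipschitz surrogate for the indicator, per-block Chebyshev via $\pscal{w}{\Sigma_\P w}_{\k}\le\norm{\Sigma_\P}$, McDiarmid for the fluctuations, and symmetrization plus Talagrand contraction for the expected supremum) matches the paper's Step 2. The only cosmetic difference is that you evaluate the Rademacher average directly at the block level via Jensen, $\bigl(\sum_q\E\norm{\mu_{S_q}-\mu_\P}_{\k}^2\bigr)^{1/2}=Q\sqrt{\Tr(\Sigma_\P)/N}$, whereas the paper first re-symmetrizes down to the $N$ individual samples before computing the same trace quantity.
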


\begin{theorem}[Consistency \& outlier-robustness of $\MMDQhat(\P,\Q)$]\label{thm:RB2}
  Assume that $\Sigma_{\P}$ and $\Sigma_{\Q} \in \L_1(\Hk)$.
 Then,  for any $\eta \in (0,1)$ such that $Q=72\delta^{-2} \ln\left(1/\eta\right)$ satisfies $Q\in (N_c/(1/2-\delta),N/2)$, with probability at least $1-\eta$,
          \begin{align*}
            &\left|\MMDQhat(\P,\Q) - \MMD(\P,\Q) \right|\\
    &\le \frac{12 \max\left(\sqrt{\frac{\left( \left\|\Sigma_{\P} \right\|  + \left\|\Sigma_{\Q} \right\|\right) \ln\left(1/\eta\right)}{\delta N}},2\sqrt{\frac{\Tr{(\Sigma_\P)} + \Tr{(\Sigma_\Q)}}{N}}\right)}{\delta}.
        \end{align*}
\end{theorem}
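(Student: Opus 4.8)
\emph{Plan.} The key structural fact is that both $\MMDQhat(\P,\Q)$ and $\MMD(\P,\Q)$ are suprema over the unit ball $B_\k$ of the \emph{same} linear functional, evaluated through a median-of-means in one case and through the true expectation in the other. Writing $h_f(x,y):=f(x)-f(y)=\pscal{f}{\k(\cdot,x)-\k(\cdot,y)}_\k$, we have $\P_{S_q}h_f=\pscal{f}{\mu_{S_q,\P}-\mu_{S_q,\Q}}_\k$ and $\P h_f=\pscal{f}{\mu_\P-\mu_\Q}_\k$, so that $\MMDQhat(\P,\Q)=\sup_{f\in B_\k}\medi{q\in[Q]}\set{\P_{S_q}h_f}$ while $\MMD(\P,\Q)=\sup_{f\in B_\k}\P h_f$. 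First I would invoke the elementary inequality $\absj{\sup_f a_f-\sup_f b_f}\le\sup_f\absj{a_f-b_f}$ together with the translation equivariance of the median to reduce the claim to the uniform median-of-means deviation bound
\begin{equation*}
\sup_{f\in B_\k}\absj{\medi{q\in[Q]}\set{\P_{S_q}g_f}}\le r,\qquad g_f:=h_f-\P h_f,
\end{equation*}
where $r$ is the right-hand side of the theorem. Since $x_{1:N}$ and $y_{1:N}$ are independent, the covariance-reproducing property gives $\E g_f=0$ and $\operatorname{Var}(g_f)=\pscal{f}{\Sigma_\P f}_\k+\pscal{f}{\Sigma_\Q f}_\k$, whence $\sup_{f\in B_\k}\operatorname{Var}(g_f)=\norm{\Sigma_\P+\Sigma_\Q}\le\norm{\Sigma_\P}+\norm{\Sigma_\Q}$.

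Next I would turn the median bound into a counting statement about blocks. By the symmetry $f\mapsto -f$ of $B_\k$ it suffices to control the positive direction: $\medi{q\in[Q]}\set{\P_{S_q}g_f}>r$ forces strictly more than $Q/2$ blocks to satisfy $\P_{S_q}g_f>r$. At most $N_c$ blocks contain a corrupted sample, and on the remaining (good) blocks the observed data coincide with the original i.i.d.\ sample. Hence, using $N_c\le Q(1/2-\delta)$, it is enough to prove that with probability at least $1-\eta/2$,
\begin{equation*}
\sup_{f\in B_\k}\sum_{q\in[Q]}\I\cro{\P_{S_q}g_f>r}\le Q\delta ,
\end{equation*}
the sum being taken over the \emph{original} sample (dropping corrupted blocks only decreases the count): then at most $Q\delta$ good blocks exceed $r$, and adding the at most $N_c$ corrupted ones yields at most $Q\delta+N_c\le Q/2$ blocks exceeding $r$, so the median is $\le r$.

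The core is the uniform control of this count, which I would carry out by the Lugosi--Mendelson soft-indicator argument. Fix a $1$-Lipschitz $\phi:\R\to[0,1]$ with $\I\cro{t>1}\le\phi(t)\le\I\cro{t>1/2}$ and set $F(f):=\sum_{q}\phi(\P_{S_q}g_f/r)$, so the count is at most $\sup_f F(f)$. Bounded differences (altering one block moves $\sup_f F$ by at most $1$) gives $\sup_f F\le\E\sup_f F+\sqrt{(Q/2)\ln(2/\eta)}$ with probability $1-\eta/2$, and I then split $\E\sup_f F\le\sup_f\E F+\E\sup_f(F-\E F)$. Chebyshev yields $\E\phi(\P_{S_q}g_f/r)\le\P(\P_{S_q}g_f>r/2)\le 4Q\operatorname{Var}(g_f)/(Nr^2)$, so $\sup_f\E F\le 4Q^2(\norm{\Sigma_\P}+\norm{\Sigma_\Q})/(Nr^2)$, which is small exactly when $r$ exceeds the first entry of the maximum. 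For the fluctuation, symmetrization and the contraction principle (since $\phi$ is $1$-Lipschitz) give $\E\sup_f(F-\E F)\le (C/r)\,\E\norm{\sum_q\epsilon_q V_q}_\k$ with $V_q:=\mu_{S_q,\P}-\mu_{S_q,\Q}-(\mu_\P-\mu_\Q)$, using $\sup_{f\in B_\k}\sum_q\epsilon_q\P_{S_q}g_f=\norm{\sum_q\epsilon_q V_q}_\k$. A Rademacher/Jensen computation gives $\E\norm{\sum_q\epsilon_q V_q}_\k\le(\sum_q\E\norm{V_q}_\k^2)^{1/2}$, and independence of the two samples with $\E\norm{\mu_{S_q,\P}-\mu_\P}_\k^2=(Q/N)\Tr(\Sigma_\P)$ yields $\E\norm{V_q}_\k^2=(Q/N)(\Tr(\Sigma_\P)+\Tr(\Sigma_\Q))$, hence $\E\sup_f(F-\E F)\le (CQ/(r\sqrt N))\sqrt{\Tr(\Sigma_\P)+\Tr(\Sigma_\Q)}$, small exactly when $r$ exceeds the second entry of the maximum. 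Choosing $Q=72\delta^{-2}\ln(1/\eta)$ balances the bounded-differences term against $Q\delta$ and forces both parts of $\E\sup_f F$ below $Q\delta/4$, so that $\sup_f F\le Q\delta$; a union bound over the two directions absorbs the factors $\eta/2$.

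I expect the main obstacle to be the uniform (over $f\in B_\k$) passage from pointwise median concentration to the supremum: the Chebyshev step alone controls only a single direction, and it is the contraction-plus-Rademacher reduction to $\E\norm{\sum_q\epsilon_q V_q}_\k$ that both legitimizes the supremum and produces the dimension-free trace term. The whole argument runs in parallel to Theorem~\ref{thm:RB}, the only changes being the replacement of $x\mapsto f(x)$ by $(x,y)\mapsto f(x)-f(y)$ and of $\Sigma_\P$ by the additive pair $\Sigma_\P,\Sigma_\Q$; because $\MMDQhat$ is already a supremum, no separate minmax/$\argmin$ step is needed here.
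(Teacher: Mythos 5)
Your proposal is correct and follows essentially the same route as the paper's proof: reduction to the uniform deviation $r_{Q,N}$ via sup-subadditivity and translation equivariance of the median, conversion to a block-counting statement with the $N_c \le Q(1/2-\delta)$ budget, the Lipschitz soft indicator $\phi$ with Chebyshev for the mean term, McDiarmid for concentration, and symmetrization plus contraction with the trace computation $\E\norm{V_q}_\k^2=(Q/N)(\Tr\Sigma_\P+\Tr\Sigma_\Q)$ for the fluctuation term. The only (harmless) deviations are organizational: you split $\E\sup_f F$ into $\sup_f\E F+\E\sup_f(F-\E F)$ rather than centering inside $Z$, and you evaluate $\E\norm{\sum_q\epsilon_q V_q}_\k$ directly at the block level instead of performing the paper's second symmetrization down to individual samples, which yields the same bound.
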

\begin{proof}[Proof (sketch)] The technical challenge is to get the optimal deviation bounds under the  (mild) trace-class assumption. The reasonings for the mean embedding and MMD follow a
similar high-level idea; here we focus on the former. First we show that the analysis can be reduced to the unit ball in $\Hk$ by proving that 
$\left\|\hat{\mu}_{\P,Q} - \mu_\P\right\|_{\k} \le (1+\sqrt{2})r_{Q,N}$, 
where  
$r_{Q,N}  = \sup_{f\in B_\k}\text{MON}_Q \big[x \mapsto \psh{f}{\k(\cdot,x)-\mu_\P}_{\k}\big]
 = \sup_{f\in B_\k} \med{q \in [Q]}\{r(f,q)\}$
with $r(f,q) = \pscal{f}{\mu_{S_q}-\mu_\P}_{\k}$.
The Chebyshev inequality with a Lipschitz argument allows us to control the probability of the event $\{r_{Q,N} \le \epsilon\}$ using the variable $Z = \sup_{f\in B_\k} \sum_{q\in U} \left[\phi\left(2r(f,q)/\epsilon\right) -  \E \phi\left(2r(f,q)/\epsilon\right)\right]$,
where $U$ stands for the indices of the uncorrupted blocks and $\phi(t) = (t-1) \I_{1\le t \le 2} + \I_{t\ge 2}$. The bounded difference property of the $Z$ supremum of empirical processes
guarantees its concentration around the expectation by using the McDiarmid inequality. The symmetrization technique combined with the Talagrand's contraction principle of Rademacher processes (thanks to
the Lipschitz property of $\phi$), followed by an other symmetrization leads to the deviation bound. Details are provided in Section~\ref{sec:proof:thm1}-\ref{sec:proof:thm2} (for Theorem~\ref{thm:RB}-\ref{thm:RB2}) in the supplementary material.
\end{proof}

\tb{Remarks}:
\begin{itemize}[labelindent=0cm,leftmargin=*,topsep=0cm,partopsep=0cm,parsep=0cm,itemsep=0cm]
  \item Dependence on $N$: These finite-sample guarantees show that the MONK estimators
      \begin{itemize}[labelindent=0cm,leftmargin=*,topsep=0cm,partopsep=0cm,parsep=0cm,itemsep=0cm]
    \item have optimal $N^{-1/2}$-rate---by recalling \citet{tolstikhin16minimax,tolstikhin17minimax}'s discussed results---, and
    \item they are robust to outliers, providing consistent estimates with high probability even under arbitrary adversarial contamination (affecting less than half of the samples).
      \end{itemize}\vspace{0.5mm}
  \item Dependence on $\delta$:  Recall that larger $\delta$ corresponds to less outliers, i.e., cleaner data in which case the bounds above become tighter. In other words,
    making use of medians the MONK estimators show robustness to outliers; this property is a nice byproduct of our optimal sub-Gaussian deviation bound. Whether
    this robustness to outliers is optimal in the studied setting is an open question.\vspace{0.5mm}
  \item Dependence on $\Sigma$:  It is worth contrasting the rates obtained in Theorem~\ref{thm:RB} and that of the tournament procedures \cite{LugosiMendelson2017-2} derived for the finite-dimensional case.
  The latter paper elegantly resolved a long-lasting open question concerning the optimal dependency in terms of $\Sigma$.
  Theorem~\ref{thm:RB} proves the same dependency in the infinite-dimensional case, while giving rise to computionally tractable algorithms (Section~\ref{sec:MONK-implementation}).\vspace{0.5mm}
  \item Separation rate: Theorem~\ref{thm:RB2} also shows that fixing the trace of the covariance operators of $\P$ and $\Q$, the MON-based MMD estimator can separate $\P$ and $\Q$ at the rate of $N^{-1/2}$.
  \item Breakdown point:  Our finite-sample bounds imply that the proposed MONK estimators using $Q$ blocks is resistant to $Q/2$ outliers. Since $Q$ is allowed to grow with $N$ (it can be
  can be chosen to be almost $N/2$), this specifically means that the breakdown point of our estimators can be $25\%$.
\end{itemize}

\section{Computing the MONK Estimator} \label{sec:MONK-implementation}
This section is dedicated to the computation\footnote{The Python code reproducing our numerical experiments is available at \url{https://bitbucket.org/TimotheeMathieu/monk-mmd}; it relies on the ITE toolbox \cite{szabo14information}.} of the analyzed MONK estimators; particularly we will focus on the MMD estimator given in Eq.~\eqref{eq:MMD-MONK-estimator}.
Numerical illustrations are provided in Section~\ref{sec:numerical-demos}.
Recall that the MONK estimator for MMD [Eq.~\eqref{eq:MMD-MONK-estimator}] is given by
\begin{align}
\label{eq:MMDQ-opt2}
  &\MMDQhat(\P,\Q)  \\
  & =  \sup_{f\in B_\k} \med{q\in[Q]} \left\{ \frac{1}{|S_q|}\sum_{j\in S_q}f(x_j)  - \frac{1}{|S_q|}\sum_{j\in S_q}f(y_j) \right\}.\nonumber
\end{align}
By the representer theorem \cite{scholkopf01generalized}, the optimal $f$ can be expressed as
 \begin{align}
f(\b a, \b b) = \sum_{n\in [N]}a_n \k(\cdot,x_n) + \sum_{n\in [N]}b_n\k(\cdot,y_n), \label{eq:f}
\end{align}
where $\b a = (a_n)_{n\in [N]}\in \R^N$ and $\b b = (b_n)_{n\in [N]}\in \R^N$.
Denote $\b c= [\b a;\b b] \in \R^{2N}$, $\b \k = [\b \k_{xx}, \b \k_{xy}; \b \k_{yx}, \b \k_{yy}]\in \R^{2N\times 2N}$, $\b \k_{xx} = [\k(x_i,x_j)]_{i,j\in [N]}\in \R^{N\times N}$, $\b \k_{xy} = [\k(x_i,y_j)]_{i,j\in [N]} = \b \k_{yx}^* \in \R^{N\times N}$, $\b \k_{yy} = [\k(y_i,y_j)]_{i,j\in [N]}\in \R^{N\times N}$.   With these notations, the
optimisation problem \eqref{eq:MMDQ-opt2} can be rewritten as
  \begin{align}
    \max_{\b c\in \R^{2N}: \b c^* \b \k \b c \le 1} \med{q\in[Q]} \Big\{ |S_q|^{-1}[\b 1_q; -\b 1_q]^* \b \k \b c \Big\}, \label{eq:c0}
  \end{align}
  where $\b 1_q \in \R^N$ is indicator vector of the block $S_q$. To enable efficient optimization we follow a block-coordinate descent (BCD)-type scheme: choose the $q_{\text{m}}\in [N]$ index
  for which the median is attained in \eqref{eq:c0}, and solve
  \begin{align}
    \max_{\b c\in \R^{2N}: \b c^* \b \k \b c \le 1} |S_{q_{\text{m}}}|^{-1} [\b 1_{q_{\text{m}}}; -\b 1_{q_{\text{m}}}]^* \b \k \b c. \label{eq:c}
  \end{align}
   This optimization problem can be solved analytically:
$\b c = \frac{ [\b 1_{q_{\text{m}}}; -\b 1_{q_{\text{m}}}]}{\left\| \b L^* [\b 1_{q_{\text{m}}}; -\b 1_{q_{\text{m}}}] \right\|_2}$,
where $\b L$ is the Cholesky factor of $\b \k$ ($\b \k = \b L \b L^*$). The observations are shuffled after each iteration.
The pseudo-code of the final MONK BCD estimator is summarized in Algorithm~\ref{alg:MONK-BCD}. 

Notice that computing $\b L$ in MONK BCD costs $O(N^3)$, which can be prohibitive for large sample size. 
In order to alleviate this bottleneck we also consider an approximate version of MONK BCD (referred to as MONK BCD-Fast), where the $\sum_{n\in [N]}$ summation after plugging \eqref{eq:f} into 
\eqref{eq:MMDQ-opt2} is replaced with $\sum_{n\in S_q}$:
\begin{eqnarray*}
    \lefteqn{\max_{\b c = [\b a, \b b]\in \R^{2N}: \b c^* \b \k \b c \le 1} \med{q\in[Q]} \left\{ \frac{1}{|S_q|}\sum_{j,n\in S_q}\left[a_n K(x_j,x_n) + \right. \right.} \\
     &&\hspace*{-0.7cm}\left. \left. b_n K(x_j,y_n)\right]  - 
     \frac{\sum_{j,n\in S_q}\left[a_n K(y_j,x_n) + b_n K(y_j,y_n)\right]}{|S_q|} \right\}.
\end{eqnarray*}
This modification allows local computations restricted to blocks and improved running time. 
The samples are shuffled periodically (e.g., at every $10$th iterations) to renew the blocks. The resulting method is presented in Algorithm~\ref{alg:MONK-BCD-Fast}. 
The computational complexity of the different MMD estimators are summarized in Table~\ref{tab:complexity}.

\begin{algorithm}[t] %*: => 2-column wide
   \caption{MONK BCD estimator for MMD}
   \label{alg:MONK-BCD}
    \begin{algorithmic}
      \STATE \tb{Input:} Aggregated Gram matrix: $\b \k$ with Cholesky factor $\b L$ ($\b \k = \b L \b L^*$).
      \FORALL{$t=1,\ldots,T$}
  \STATE Generate a random permutation of $[N]$: $\sigma$.
  \STATE Shuffle the samples according to $\sigma$: for $\forall q\in [Q]$
      \vspace*{-0.2cm}
      \begin{align*}
	  S_q = \left\{\sigma\left((q-1)\frac{N}{Q}+1\right),\ldots,\sigma\left(q\frac{N}{Q}\right)\right\}.
      \end{align*}
      \vspace*{-0.4cm}
  \STATE Find the block attaining the median ($q_{\text{m}}$): 
    \vspace*{-0.2cm}
    \begin{align*}
	\frac{[\b 1_{q_{\text{m}}}; -\b 1_{q_{\text{m}}}]^* \b \k \b c}{|S_{q_{\text{m}}}|} = \med{q\in[Q]}  \frac{[\b 1_q; -\b 1_q]^* \b \k \b c}{|S_q|}.
    \end{align*}
    \vspace*{-0.4cm}
  \STATE Compute the coefficient vector: $\b c = \frac{ \left[\b 1_{q_{\text{m}}}; -\b 1_{q_{\text{m}}}\right]}{\left\| \b L^* [\b 1_{q_{\text{m}}}; -\b 1_{q_{\text{m}}}] \right\|_2}$.
      \ENDFOR
      \STATE \tb{Output:} $\med{q\in[Q]} \Big( \frac{1}{|S_q|}[\b 1_q; -\b 1_q]^* \b \k \b c \Big)$
    \end{algorithmic}
\end{algorithm}

\begin{algorithm} %*: => 2-column wide
\begin{minipage}{\textwidth/2}
   \caption{MONK BCD-Fast estimator for MMD}
   \label{alg:MONK-BCD-Fast}
    \begin{algorithmic}
    \STATE \tb{Input:} Aggregated Gram matrix: $\b \k$ with Cholesky factor $\b L$ ($\b \k = \b L \b L^*$). Incides at which we shuffle: $J$.
      \FORALL{$t=1,\ldots,T$}
      \IF{$t\in J$}
		\STATE Generate a random permutation of $[N]$: $\sigma$.
		\STATE Shuffle the samples according to $\sigma$: for $\forall q\in [Q]$
		\vspace*{-0.2cm}
		\begin{align*}
		    S_q = \left\{\sigma\left((q-1)\frac{N}{Q}+1\right),\ldots,\sigma\left(q\frac{N}{Q}\right)\right\}.
		\end{align*}
		\vspace*{-0.4cm}
		\STATE Compute the Gram matrices and the Cholesky factors on each block $\b \k_q$ and $\b L_q$ for $q\in[Q]$. 
	  \ENDIF
  \STATE Find the block\footnote{$\mathds{1}_q\in \R^{|S_q|}$ denotes the vector of ones of size $|S_q|$. \label{Algs:shared-footnote}} attaining the median ($q_{\text{m}}$): 
  		\vspace*{-0.2cm}
		\begin{align*}
		      \frac{[\mathds{1}_{q_{\text{m}}}; -\mathds{1}_{q_{\text{m}}}]^* \b \k_{q_{\text{m}}} \b c_{q_{\text{m}}}}{|S_{q_{\text{m}}}|} = \med{q\in[Q]}  \frac{[\mathds{1}_q; -\mathds{1}_q]^* \b \k_{q} \b c_q}{|S_q|}.		 
		\end{align*}
		\vspace*{-0.4cm}
  \STATE Update the coefficient vector: $\b c_{q_{\text{m}}} = \frac{ \left[\mathds{1}_{q_{\text{m}}}; -\mathds{1}_{q_{\text{m}}}\right]}{\left\| \b L_{q_{\text{m}}}^* [\mathds{1}_{q_{\text{m}}}; -\mathds{1}_{q_{\text{m}}}] \right\|_2}$.
      \ENDFOR
      \STATE \tb{Output:} $\med{q\in[Q]} \Big( \frac{1}{|S_q|}[\mathds{1}_q; -\mathds{1}_q]^* \b \k_q \b c_q \Big)$
    \end{algorithmic}
\end{minipage}    
\end{algorithm}

\begin{table}
 \caption{Computational complexity of MMD estimators. $N$: sample number, $Q$: number of blocks, $T$: number of iterations.} \label{tab:complexity}
 \begin{center}
     \begin{tabular}{ll}
      \toprule
      Method & Complexity \\\toprule
       U-Stat & $\O\left(N^2\right)$ \\
       MONK BCD & $\O\left( N^3 + T \left[N^2 + Q\log(Q) \right] \right)$ \\
       MONK BCD-Fast & $\O\left( \frac{N^3}{Q^2} + T\left[\frac{N^2}{Q} + Q \log(Q) \right]\right)$ \\\bottomrule
     \end{tabular}
 \end{center}
\end{table}

\section{Numerical Illustrations} \label{sec:numerical-demos}
In this section, we demonstrate the performance of the proposed MONK estimators.  We exemplify the idea on the MMD estimator [Eq.~\eqref{eq:MMD-MONK-estimator}] with the BCD optimization schemes (MONK BCD and 
MONK BCD-Fast) discussed in Section~\ref{sec:MONK-implementation}. Our baseline is the classical U-statistic based MMD estimator [Eq.~\eqref{eq:MMD-UStat}; referred to as U-Stat in the sequel].

The primary goal in the first set of experiments is to understand and demonstrate various aspects of the estimators for $(\k,\P,\Q)$ triplets \citep[Table~3.3]{maundet17kernel}
when analytical expression is available for MMD. This is the case for polynomial and RBF kernels ($K$), with Gaussian distributions ($\P$, $\Q$). Notice that in the first (second) case the features 
are unbounded (bounded). Our second numerical example illustrates the applicability of the studied MONK estimators in biological context, in discriminating DNA subsequences with string kernel. 

\tb{Experiment-1:} We used the quadratic and the RBF kernel with bandwith $\sigma=1$ for demonstration purposes and investigated the estimation error compared to the true MMD value: $|\MMDQhat(\P,\Q)-\MMD(\P,\Q)|$. The errors are aggregates over $100$ Monte-Carlo simulations, summarized in the median and quartile values.
The number of samples ($N$) was chosen from $\{200, 400, \ldots, 2000\}$.

We considered three different experimental settings for  $(\P,\Q)$ and the absence/presence of outliers:
\begin{enumerate}[labelindent=0cm,leftmargin=*,topsep=0cm,partopsep=0cm,parsep=0cm,itemsep=0cm]
  \item Gaussian distributions with no outliers: In this case $\P = \mathcal{N}\left(\mu_1,\sigma_1^2\right)$ and $\Q = \mathcal{N}\left(\mu_2,\sigma_2^2\right)$ were normal where
	$(\mu_1,\sigma_1) \ne (\mu_2,\sigma_2)$, $\mu_1$, $\sigma_1$, $\mu_2$,
	$\sigma_2$ were randomly chosen from the $[0,1]$ interval, and then their values were fixed. The estimators had access to
	$(x_n)_{n=1}^N \stackrel{\text{i.i.d.}}{\sim} \P$ and $(y_n)_{n=1}^N \stackrel{\text{i.i.d.}}{\sim} \Q$.\vspace{2mm}
  \item Gaussian distributions with outliers: This setting is a corrupted version of the first one. Particularly, the dataset consisted of $(x_n)_{n=1}^{N-5} \stackrel{\text{i.i.d.}}{\sim} \P$,
	$(y_n)_{n=1}^{N-5} \stackrel{\text{i.i.d.}}{\sim} \Q$, while the remaining $5$-$5$ samples were set to $x_{N-4} = \ldots = x_N = 2000$,
	$y_{N-4}=\dots =y_N=4000$. \vspace{2mm}
  \item Pareto distribution without outliers: In this case $\P = \Q = Pareto(3)$ 
	hence $\MMD(\P,\Q) = 0$ and the estimators used $(x_n)_{n=1}^N \stackrel{\text{i.i.d.}}{\sim} \P$ and $(y_n)_{n=1}^N \stackrel{\text{i.i.d.}}{\sim} \Q$.
\end{enumerate}
The $3$ experiments were constructed to understand different aspects of the estimators: how a few outliers can ruin classical estimators (as we move from Experiment-1 to Experiment-2); in Experiment-3
 the heavyness of the tail of a Pareto distribution makes the task non-trivial. 

Our results on the three datasets with various $Q$ choices are summarized in Fig.~\ref{fig:comparison}. As we can see from Fig.~\ref{fig:poly} and Fig.~\ref{fig:rbf} in the outlier-free case, the MONK estimators are slower
than the U-statistic based one; the accuracy is of the same order for both kernels. As demonstrated by
Fig.~\ref{fig:poly_out} in the corrupted setup even a small number of outliers can completely ruin traditional MMD estimators for unbounded features while the MONK
estimators are naturally robust to outliers with suitable choice of $Q$;\footnote{In 
case of unknown $N_c$, one could choose $Q$ adaptively by the Lepski method (see for example \cite{MR3576558}) at the price of increasing the computational effort. 
Though the resulting $Q$ would increase the computational time, it would be adaptive thanks to its data-driven nature, and would benefit from the same guarantee as the fixed $Q$ appearing in Theorem~\ref{thm:RB}-\ref{thm:RB2}.} \label{footnote:Lepski} this is precisely the setting the MONK estimators were designed for.
In case of bounded kernels (Fig.~\ref{fig:rbf_out}), by construction, traditional MMD estimators are resistant to outliers; the MONK BCD-Fast method achieves comparable performance.
In the final Pareto experiment (Fig.~\ref{fig:poly_pareto} and Fig.~\ref{fig:rbf_pareto}) where the distribution produces ``natural outliers'',  again MONK estimators are more robust with respect to corruption than the one relying on U-statistics in the case of polynomial kernel.
These experiments illustrate the power of the studied MONK schemes: these estimators achieve comparable performance in case of bounded features, while for unbounded features they can efficiently cope with the presence of outliers.

\begin{figure*}
\begin{center}
    \subfloat[Gaussian distribution, $N_c =0$ (no outlier), quadratic kernel.\label{fig:poly}]{
      \includegraphics[width=0.32\textwidth]{./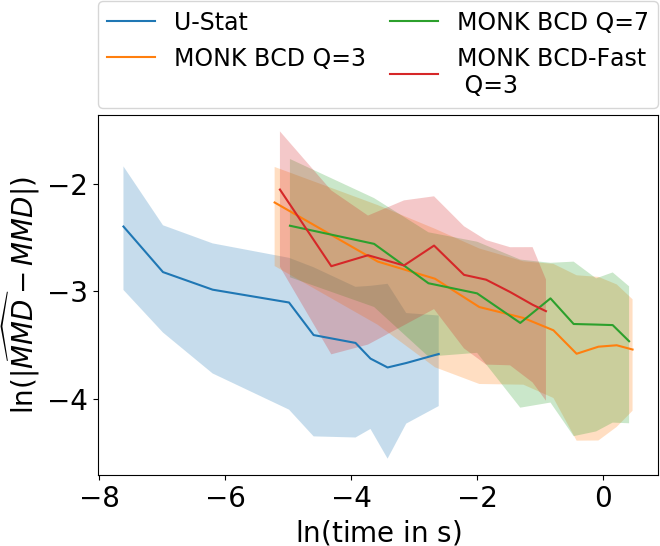}
                         }
\subfloat[Gaussian distribution,  $N_c=5$ outliers, quadratic kernel.\label{fig:poly_out}]{
      \includegraphics[width=0.32\textwidth]{./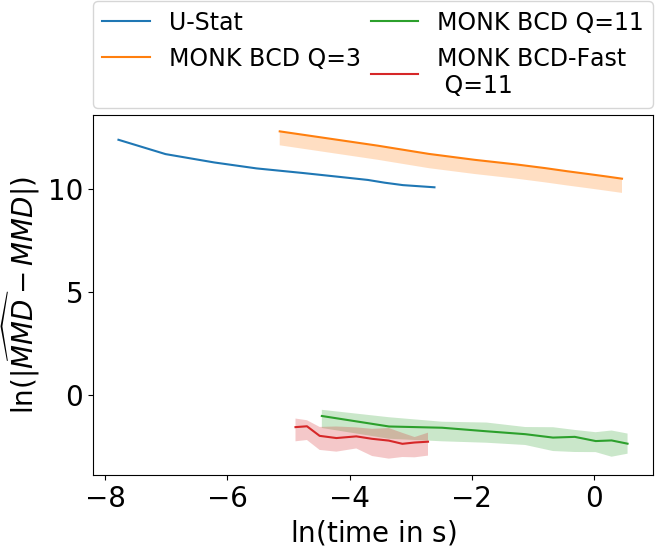}
                         }
\subfloat[Pareto distribution, quadratic kernel.\label{fig:poly_pareto}]{
      \includegraphics[width=0.32\textwidth]{./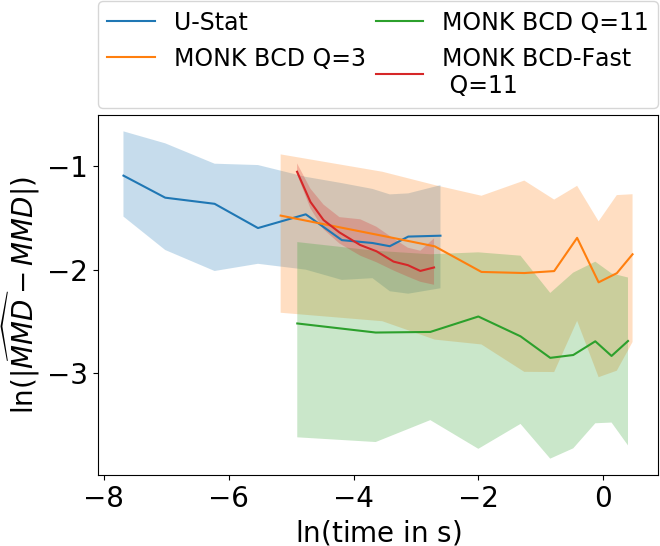}
                         }                         
  \\
    \subfloat[Gaussian distribution, $N_c=0$ (no outlier), RBF kernel.\label{fig:rbf}]{
      \includegraphics[width=0.32\textwidth]{./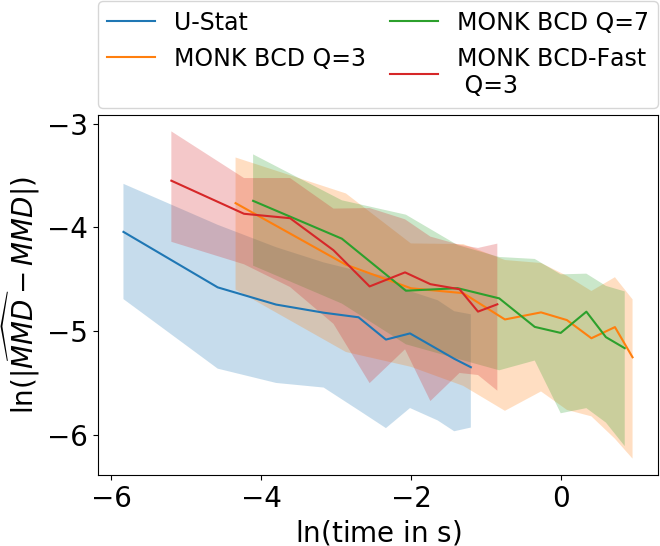}
                         }                         
\subfloat[Gaussian distribution,  $N_c=5$ outliers, RBF kernel.\label{fig:rbf_out}]{
      \includegraphics[width=0.32\textwidth]{./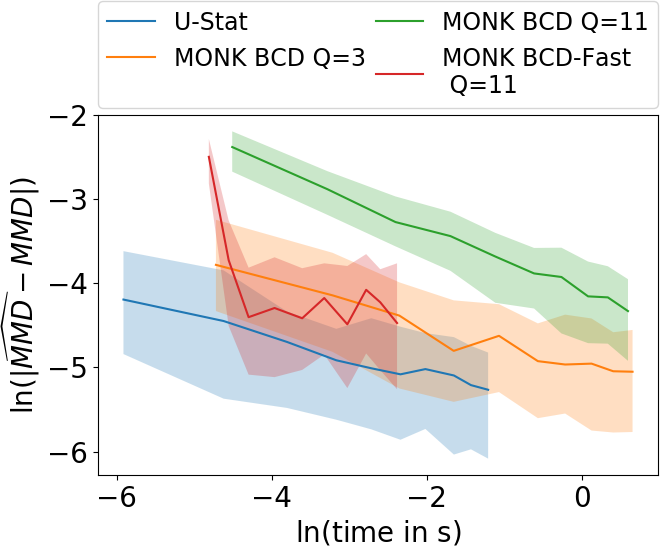}
                         }                         
\subfloat[Pareto distribution, RBF kernel.\label{fig:rbf_pareto}]{
      \includegraphics[width=0.32\textwidth]{./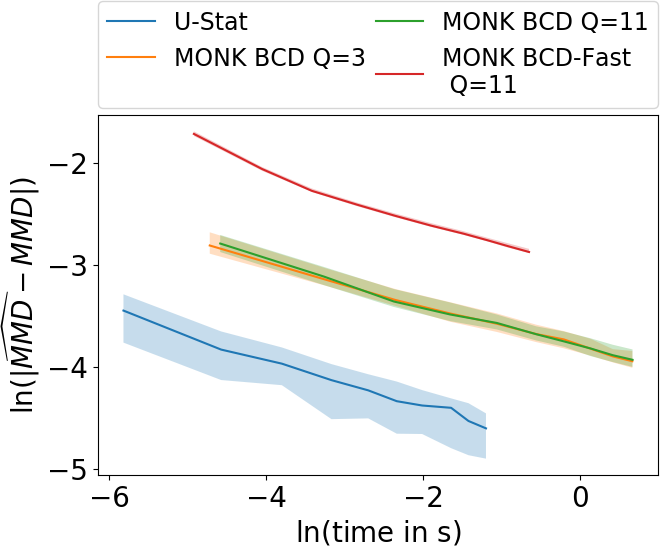}
                         }
\caption{Performance of the MMD estimators: median and quartiles of $\ln(|\MMDQhat(\P,\Q)-\MMD(\P,\Q)|)$. Columns from left to right: Experiment-1 -- Experiment-3. Top: quadratic kernel, bottom: RBF kernel.\label{fig:comparison}}
\end{center}
\end{figure*}

\tb{Experiment-2 (discrimination of DNA subsequences):} In order to demonstrate the applicability of our estimators in biological context, we chose 
a DNA benchmark from the UCI repository~\cite{Dua:2017}, the Molecular Biology (Splice-junction Gene Sequences) Data Set.
The dataset consists of $3190$ instances of $60$-character-long DNA subsequences. The problem is to 
recognize, given a sequence of DNA, the boundaries between exons (the parts of the DNA sequence retained after splicing) and 
introns (the parts of the DNA sequence that are spliced out). This task consists of two subproblems, 
identifying the exon/intron boundaries (referred to as EI sites) and the intron/exon boundaries (IE sites).\footnote{In the biological community, IE borders are referred to 
as ``acceptors'' while EI borders are referred to as ``donors''.} We took $1532$ of these samples by selecting $766$ instances from both the EI and the IE classes 
(the class of those being neither EI nor IE is more heterogeneous and thus we dumped it from the study), and investigated the discriminability of the EI and IE categories. 
We represented the DNA sequences as  strings ($\X$), chose $\k$ as the String Subsequence Kernel \citep{lodhi02text} to compute MMD, and performed two-sample testing based on MMD using the MONK BCD, MONK BCD-Fast and U-Stat estimators.
For completeness the pseudocode of the hypothesis test is detailed in Algorithm~\ref{alg:DNA} (Section~\ref{sec:experiment-2}). $Q$, the number of blocks in the MONK 
techniques, was equal to $5$. The significance level was $\alpha = 0.05$. To  assess the variability of the results $400$ Monte Carlo simulations were 
performed, each time uniformly sampling $N$ points without replacement resulting in $(X_n)_{n\in [N]}$ and $(Y_n)_{n\in [N]}$. To provide more detailed insights the aggregated values of 
$\widehat{\MMD}(\text{EI},\text{IE})-\hat{q}_{1-\alpha}$, $\widehat{\MMD}(\text{EI},\text{EI})-\hat{q}_{1-\alpha}$ and $\widehat{\MMD}(\text{IE},\text{IE})-\hat{q}_{1-\alpha}$
are summarized in Fig.~\ref{fig:dna}, where $\hat{q}_{1-\alpha}$ is the estimated $(1-\alpha)$-quantile via $B=150$ bootstrap permutations. In the ideal case, 
$\widehat{\MMD}-\hat{q}_{1-\alpha}$ is positive (negative) in the inter-class (intra-class) experiments. As Fig.~\ref{fig:dna} shows all 3 techniques are able to 
solve the task, both in the inter-class (when the null hypothesis does not hold; Fig.~\ref{fig:dna:inter}) and the intra-class experiment (null holds; Fig.~\ref{fig:dna:intra:EI} and 
Fig.~\ref{fig:dna:intra:IE}), and 
they converge to a good and stable performance as a function of the sample number. 
It is important to note that the MONK BCD-Fast method is especially well-adapted to problems where the kernel computation (such as the String Subsequence Kernel) or the sample size is a bottleneck, as its computation is 
often significantly faster compared to the U-Stat technique. For example, taking all the samples ($N=766$) in the DNA benchmark with $Q=15$, computing MONK BCD-Fast (U-Stat) takes $32s$ ($1m28s$).
These results illustrate the applicability of our estimators in gene analysis.

\begin{figure*}
\begin{center}
  \subfloat[Inter-class: EI-IE\label{fig:dna:inter}]{\includegraphics[width=0.32\textwidth]{./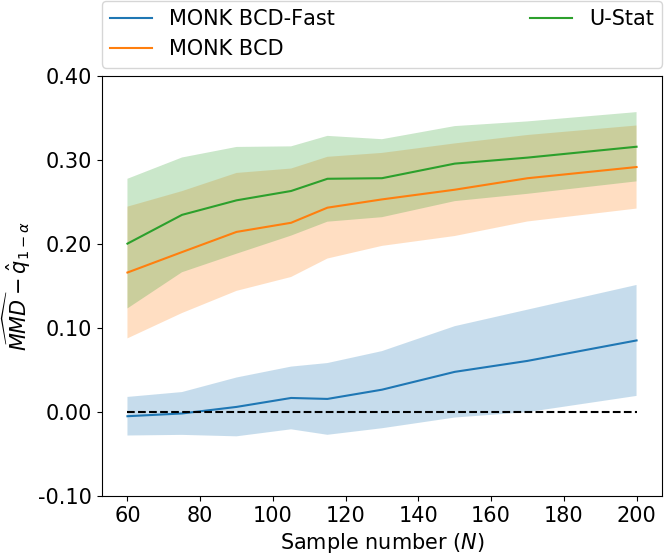}}\hspace{0.1cm}
  \subfloat[Intra-class: EI-EI\label{fig:dna:intra:EI}]{\includegraphics[width=0.32\textwidth]{./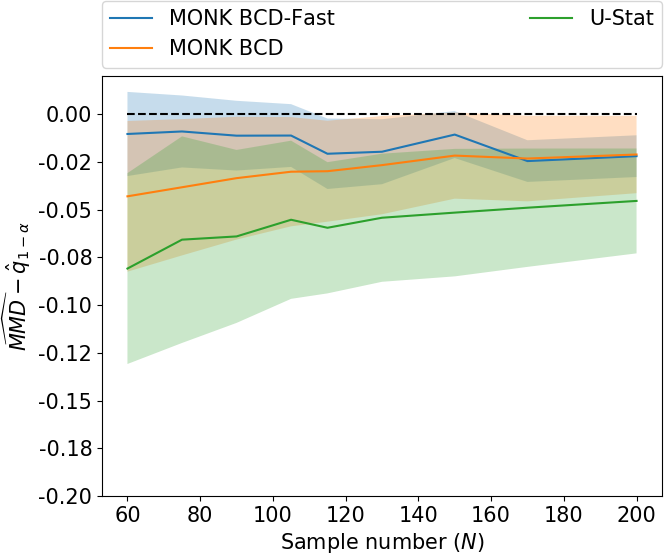}}\hspace{0.1cm}
  \subfloat[Intra-class: IE-IE\label{fig:dna:intra:IE}]{\includegraphics[width=0.32\textwidth]{./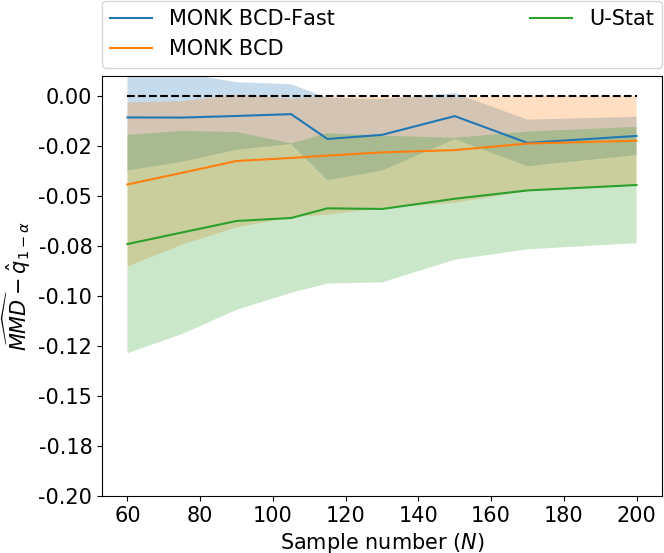}}
  \caption{Inter-class and intra-class MMD estimates as a function of the sample number compared to the bootstrap-estimated $(1-\alpha)$-quantile: $\MMDhat - \hat{q}_{1-\alpha}$; mean $\pm$ std. The null hypothesis is rejected iff 
  $\MMDhat - \hat{q}_{1-\alpha}>0$. Notice the different scale of $\MMDhat - \hat{q}_{1-\alpha}$ in the inter-class and the intra-class experiments.\label{fig:dna}}
\end{center}
\end{figure*}

\section*{Acknowledgements}
Guillaume Lecu{\'e} is supported by a grant of the French National Research Agency (ANR), “Investissements d'Avenir” (LabEx Ecodec/ANR-11-LABX-0047).

\bibliography{./BIB/MONK_short}
\bibliographystyle{icml2019}

\clearpage
\appendix

\begin{center}
 {\Large\tb{Supplement}}
 \end{center}\vspace{0.2cm}

 The supplement contains the detailed proofs of our results (Section~\ref{sec:proofs}), a few technical lemmas used during these arguments (Section~\ref{sec:app:proofs}), the McDiarmid inequality for 
 self-containedness (Section~\ref{sec:app:ext-lemmas}), and the
 pseudocode of the two-sample test performed in Experiment-2 (Section~\ref{sec:experiment-2}).

 \section{Proofs of Theorem~\ref{thm:RB} and Theorem~\ref{thm:RB2}}\label{sec:proofs}
 This section contains the detailed proofs of Theorem~\ref{thm:RB} (Section~\ref{sec:proof:thm1}) and Theorem~\ref{thm:RB2} (Section~\ref{sec:proof:thm2}).
\subsection{Proof of Theorem~\ref{thm:RB}}\label{sec:proof:thm1}
 The structure of the proof is as follows:
\begin{enumerate}[labelindent=0cm,leftmargin=*,topsep=0cm,partopsep=0cm,parsep=0cm,itemsep=0cm]
  \item We show that $\left\|\hat{\mu}_{\P,Q} - \mu_\P\right\|_{\k} \le (1+\sqrt{2})r_{Q,N}$, where
    $r_{Q,N}  = \sup_{f\in B_\k}\text{MON}_Q \Big[ \underbrace{\psh{f}{\k(\cdot,x)-\mu_\P}_{\k}}_{f(x) - \P f}\Big]$, i.e.\ the analysis can be reduced to $B_\k$.
  \item Then $r_{Q,N}$ is bounded using empirical processes.
\end{enumerate}

\tb{Step-1}:
Since $\Hk$ is an inner product space, for any $f\in \Hk$
\begin{align}
    &\norm{f-\k(\cdot,x)}_{\k}^2-\norm{\mu_\P-\k(\cdot,x)}_{\k}^2 \nonumber\\
    & =  \norm{f-\mu_\P}_{\k}^2 - 2\psh{f- \mu_\P}{\k(\cdot,x)-\mu_\P}_{\k}. \label{eq:MQDec}
\end{align}
Hence, by denoting $e =\hat{\mu}_{\P,Q}-\mu_\P$, $\tilde{g} = g - \mu_\P$ we get
\begin{align}
  &\norm{e}_{\k}^2-2r_{Q,N}\norm{e}_{\k} \nonumber\\
  &\stackrel{(a)}{\le}   \norm{e}_{\k}^2 - 2\MOM{Q}{\pscal{\frac{e}{\norm{e}_{\k}}}{\k(\cdot,x) - \mu_\P}}_{\k}\norm{e}_{\k}\nonumber\\
  &\stackrel{(b)}{\le} \MOM{Q}{\norm{e}_{\k}^2 - 2 \pscal{\frac{e}{\norm{e}_{\k}}}{\k(\cdot,x) - \mu_\P}_{\k} \norm{e}_{\k}}\nonumber\\
 &\stackrel{(c)}{\le} \MOM{Q}{\norm{\hat{\mu}_{\P,Q} - \k(\cdot,x)}_{\k}^2-\norm{\mu_\P - \k(\cdot,x)}_{\k}^2}\nonumber\\
 &\stackrel{(d)}{\le} \sup_{g\in \Hk}\MOM{Q}{\norm{\hat{\mu}_{\P,Q} - \k(\cdot,x)}_{\k}^2-\norm{g - \k(\cdot,x)}_{\k}^2}\nonumber\\
 &\stackrel{(e)}{\le} \sup_{g\in \Hk}\MOM{Q}{\norm{\mu_\P - \k(\cdot,x)}_{\k}^2-\norm{g - \k(\cdot,x)}_{\k}^2}\nonumber\\
 & \stackrel{(f)} = \sup_{g\in \Hk}\Big\{2\text{MON}_{Q}\Big[{\underbrace{\psh{\tilde{g}}{\k(\cdot,x) - \mu_\P}_{\k}}_{\norm{\tilde{g}}_{\k}\psh{\frac{\tilde{g}}{\norm{\tilde{g}}_{\k}}}{\k(\cdot,x) - \mu_\P}_{\k}}}\Big] - \norm{\tilde{g}}_{\k}^2\Big\}\nonumber\\
 &\stackrel{(g)}{=}\sup_{g\in \Hk}\set{2\norm{\tilde{g}}_{\k}r_{Q,N} - \norm{\tilde{g}}_{\k}^2} \stackrel{(h)}{\leq} r_{Q,N}^2, \label{eq:r}
\end{align}
where we used in (a)  the definition of $r_{Q,N}$, (b) the linearity\footnote{$\MOM{Q}{c_1+c_2f} = c_1 + c_2 \MOM{Q}{f}$ for any $c_1, c_2\in \R$.} of $\MOM{Q}{\cdot}$, (c) Eq.~\eqref{eq:MQDec}, (d) $\sup_g$, (e) the definition of $\hat{\mu}_{\P,Q}$, (f) Eq.~\eqref{eq:MQDec} and the
linearity of $\MOM{Q}{\cdot}$, (g) the definition of $r_{Q,N}$. In step (h), by denoting $a=\norm{\tilde{g}}_{\k}$, $r=r_{Q,N}$, the argument of the $\sup$ takes the form $2ar-a^2$; $2ar-a^2 \le r^2 \Leftrightarrow 0 \le r^2-2ar+a^2 = (r-a)^2$.

In Eq.~\eqref{eq:r}, we obtained an equation $a^2-2ra \le r^2$ where $a:=\norm{e}_{\k}\ge 0$. Hence
$r^2 +2ra -a^2\ge 0$, $r_{1,2} =  \left[-2a\pm \sqrt{4a^2+4a^2}\right]/2 = \left(-1 \pm \sqrt{2}\right)a$, thus by the non-negativity of $a$, $r\ge (-1+\sqrt{2})a$, i.e.,
$a \le \frac{r}{\sqrt{2}-1} = (\sqrt{2}+1)r$. In other words, we arrived at
\begin{align}\label{eq:TrivBound}
 \norm{\hat{\mu}_{\P,Q} - \mu_\P}_{\k} &\leq \left(1+\sqrt{2}\right)r_{Q,N}.
\end{align}
It remains to upper bound $r_{Q,N}$.

\tb{Step-2}: Our goal is to provide a probabilistic bound on
\begin{align*}
    & r_{Q,N}=\sup_{f\in B_\k}\MOM{Q}{x\mapsto\psh{f}{\k(\cdot,x)-\mu_\P}_{\k}}\\
    & = \sup_{f\in B_\k} \med{q \in [Q]}\{\underbrace{\pscal{f}{\mu_{S_q}-\mu_\P}_{\k}}_{=:r(f,q)}\}.
\end{align*}
The $N_c$ corrupted samples can affect (at most) $N_c$ of the $(S_q)_{q\in [Q]}$ blocks. Let $U := [Q] \backslash C$ stand for the indices of the uncorrupted sets, where
$C:=\{q\in [Q]: \exists n_j\text{ s.t. } n_j \in S_q,\, j\in[N_c]\}$ contains the indices of the corrupted sets.
If
\begin{align}
  \forall f \in B_\k: \underbrace{\left|\left\{q\in U: r(f,q)\ge \epsilon \right\}\right|}_{\sum_{q\in U} \I_{r(f,q) \ge \epsilon}} + N_c \le \frac{Q}{2}, \label{eq:good-event}
\end{align}
then for $\forall f \in B_\k$, $\medi{q \in [Q]}\{r(f,q)\} \le \epsilon$, i.e.\
$\sup_{f\in B_\k} \medi{q \in [Q]}\{r(f,q) \} \le \epsilon$. Thus, our task boils down to controlling the event in \eqref{eq:good-event}
by appropriately choosing $\epsilon$.

\begin{itemize}[labelindent=0cm,leftmargin=*,topsep=0cm,partopsep=0cm,parsep=0cm,itemsep=0cm]
 \item \tb{Controlling} $r(f,q)$: For any $f\in B_K$ the random variables $\psh{f}{k(\cdot,x_i)-\mu_\P}_{\Hk}=f(x_i)-\P f$ are independent, have zero mean, and
    \begin{align}
  &\E_{x_i\sim \P} \pscal{f}{k(\cdot,x_i)-\mu_\P}_{\k}^2  = \pscal{f}{\Sigma_{\P}f}_{\k}\nonumber \\
  &\le
   \left\|f\right\|_{\k} \left\|\Sigma_\P f\right\|_{\k}
      \le \left\|f\right\|_{\k}^2 \left\|\Sigma_\P\right\|  = \left\|\Sigma_\P\right\| \label{eq:1-term}
      \end{align}
      using the reproducing property of the kernel and the covariance operator, the Cauchy-Schwarz (CBS) inequality and $\left\|f\right\|_{\Hk}=1$.

      For a zero-mean random variable $z$ by the Chebyshev's inequality $\P \left( z > a \right) \le \P \left( |z| > a \right) \le  \E\left(z^2\right) / a^2$, which implies
      $\P\left(z> \sqrt{\E \left(z^2\right)/ \alpha}\right)\le \alpha$ by a $\alpha = \E\left(z^2\right)/a^2$ substitution.
      With $z := r(f,q)$ ($q\in U$), using $\E\left[z^2\right] = \E \pscal{f}{\mu_{S_q} - \mu_\P}_{\k}^2 =  \frac{Q}{N}\E_{x_i\sim \P} \pscal{f}{k(\cdot,x_i)-\mu_\P}_{\k}^2$ and
      Eq.~\eqref{eq:1-term} one gets that for all $f\in B_\k$, $\alpha \in (0,1)$ and $q \in U$:
      $\P\left( r(f,q) > \sqrt{\frac{\left\|\Sigma_{\P} \right\| Q}{\alpha N}} \right) \le \alpha$. % \label{eq:r-concentration}
      This means $\P\left(r(f,q) > \frac{\epsilon}{2}\right) \le \alpha$ with $\epsilon \ge 2 \sqrt{\frac{\left\|\Sigma_{\P} \right\| Q}{\alpha N}}$.
    \item \tb{Reduction to $\phi$}: As a result
  \begin{equation*}
  \sum_{q\in U} \P\left(r(f,q)  \ge \frac{\epsilon}{2}\right) \le |U|\alpha
  \end{equation*}
  happens if and only if 
\begin{align*}
  &\sum_{q \in U} \I_{r(f,q)  \ge  \epsilon} \\
  & \le |U|\alpha+
   \sum_{q \in U} \Big[ \I_{r(f,q) \ge  \epsilon}
    -\underbrace{\P\left(r(f,q) \ge \frac{\epsilon}{2}\right)}_{\E \left[\I_{r(f,q) \ge \frac{\epsilon}{2}}\right]}\Big] =: A.
  \end{align*}
  Let us introduce $\phi:t\in\R\to(t-1) \I_{1\le t \le 2} + \I_{t\ge 2}$.
  $\phi$ is $1$-Lipschitz and satisfies $ \I_{2\le t} \le \phi(t) \le \I_{1\le t}$ for any $t\in \R$. Hence, we can upper bound
  $A$ as
  \begin{align*}
      A &\le |U| \alpha + \sum_{q\in U} \left[\phi\left(\frac{2r(f,q)}{\epsilon}\right) -  \E \phi\left(\frac{2r(f,q)}{\epsilon}\right)\right]
  \end{align*}
  by noticing that $\epsilon \le r(f,q) \Leftrightarrow 2  \le 2r(f,q)/\epsilon$ and $\epsilon/2 \le r(f,q) \Leftrightarrow 1 \le 2r(f,q) / \epsilon$, and
  by using the $\I_{2\le t} \le \phi(t)$ and the $\phi(t) \le \I_{1\le t}$ bound, respectively. Taking supremum over $B_\k$ we arrive at
  \begin{align*}
      &\sup_{f\in B_\k}\sum_{q \in U} \I_{r(f,q)\ge \epsilon} \\ 
      &\le
       |U|\alpha + \underbrace{\sup_{f\in B_\k} \sum_{q\in U} \left[\phi\left(\frac{2r(f,q)}{\epsilon}\right) -  \E \phi\left(\frac{2r(f,q)}{\epsilon}\right)\right]}_{=:Z}.
  \end{align*}
    \item \tb{Concentration of $Z$ around its mean:} Notice that $Z$ is a function of $x_V$, the samples in the uncorrupted blocks; $V = \cup_{q\in U} S_q$.
    By the bounded difference property of $Z$ (Lemma~\ref{lemma:Z:bounded-diff}) for any $\beta > 0$,
    the McDiarmid inequality (Lemma~\ref{lemma:McDiarmid}; we choose $\tau := Q \beta^2 / 8$ to get linear scaling in $Q$ on the r.h.s.) implies that
    \begin{align*}
        \P\left( Z < \E_{x_V}[Z] + Q  \beta\right) \ge 1 - e^{-\frac{Q \beta^2}{8}}.
    \end{align*}
    \item \tb{Bounding $\E_{x_V}[Z]$}: Let $M=N/Q$ denote the number of elements in $S_q$-s. The $\G = \{g_f: f\in B_{\k}\}$ class with $g_f: \X^M \rightarrow \R$ and $\P_M := \frac{1}{M} \sum_{m=1}^M\delta_{u_m}$ defined as
    \begin{align*}
    g_f(u_{1:M}) &= \phi\left(\frac{\pscal{f}{\mu_{\P_M}-\mu_{\P}}_{\k}}{\epsilon}\right)
    \end{align*}
    is uniformly bounded separable Carath{\'e}odory (Lemma~\ref{eq:UBSC}), hence the symmetrization technique \citep[Prop.~7.10]{steinwart08support}, \citep{LT:91} gives
    \begin{align*}
        \E_{x_V}[Z] \le 2 \E_{x_V}\E_{\b{e}}  \sup_{f\in B_{\k}} \left| \sum_{q \in U} e_q \phi\left(\frac{2r(f,q)}{\epsilon} \right)\right|,
    \end{align*}
    where $\b{e} = (e_q)_{q\in U}\in\R^{|U|}$ with i.i.d.\ Rademacher  entries [$\P(e_q=\pm 1)=\frac{1}{2}$ ($\forall q$)].
    \item \tb{Discarding $\phi$}: Since $\phi(0) = 0$ and $\phi$ is 1-Lipschitz, by Talagrand's contraction principle of Rademacher processes \cite{LT:91}, \citep[Theorem~2.3]{koltchinskii11oracle} one gets
  \begin{align*}
      & \E_{x_V} \E_{\b{e}}  \sup_{f\in B_{\k}} 
      \left| \sum_{q \in U} e_q \phi\left(\frac{2r(f,q)}{\epsilon} \right)\right| \\ 
      &\le
      2 \E_{x_V}\E_{\b{e}}  \sup_{f\in B_{\k}} \left| \sum_{q \in U} e_q \frac{2r(f,q)}{\epsilon} \right|.
  \end{align*}
    \item \tb{Switching from $|U|$ to $N$ terms}: Applying an other symmetrization [(a)], the CBS inequality, $f\in B_\k$, and the Jensen inequality
  \begin{align*}
      &\E_{x_V}\E_{\b{e}}   \sup_{f\in B_{\k}} \left| \sum_{q=1}^Q e_q \frac{r(f,q)}{\epsilon} \right| \\
      & \stackrel{(a)}{\le} \frac{2Q}{\epsilon N}  \E_{x_{V}}\E_{\b{e}'} \Bigg[ \sup_{f\in B_\k} \Big| \underbrace{\sum_{n\in V} e_n' \left<f,\k(\cdot,x_n) - \mu_\P\right>_{\k}}_{= \left<f,\sum_{n \in V} e_n' \left[\k(\cdot,x_n) - \mu_\P\right]\right>_{\k}}\Big| \Bigg]\\
      &\le  \frac{2Q}{\epsilon N}  \E_{x_{V}}\E_{\b{e}'} \left[ \sup_{f\in B_\k} \underbrace{\left\|f\right\|_{\k}}_{=1} \left\| \sum_{n\in V} e_n' \left[\k(\cdot,x_n) - \mu_\P\right]\right\|_{\k} \right] \\
      & =  \frac{2Q}{\epsilon N}  \E_{x_{V}}\E_{\b{e}'}   \left\| \sum_{n\in V} e_n' \left[\k(\cdot,x_n) - \mu_\P\right]\right\|_{\k} \\
      &\le \frac{2Q}{\epsilon N}  \sqrt{ \E_{x_{V}}\E_{\b{e}'}   \left\| \sum_{n \in V} e_n' \left[\k(\cdot,x_n) - \mu_\P\right]\right\|_{\k}^2} \\
      & \stackrel{(b)}{=} \frac{2Q \sqrt{|V|\Tr(\Sigma_\P)}}{\epsilon N}.
  \end{align*}
  In (a), we proceed as follows:
  \begin{align*}
      &\E_{x_V}\E_{\b{e}}   \sup_{f\in B_{\k}} \left| \sum_{q \in U} e_q \frac{r(f,q)}{\epsilon} \right| \\
      &= \E_{x_V}\E_{\b{e}}   \sup_{f\in B_{\k}} \left| \sum_{q \in U} e_q \frac{\left<f,\mu_{S_q}-\mu_\P\right>_{\k}}{\epsilon} \right|\\
      & \stackrel{(c)}{\le} \frac{2Q}{N \epsilon} \E_{x_V} \E_{\b{e}} \E_{\b{e}'}  \sup_{f\in B_{\k}} \left| \sum_{n \in V} e_n' e_n'' \left<f,\k(\cdot,x_n)-\mu_\P\right>_{\k} \right|\\
       &= \frac{2Q}{N \epsilon} \E_{x_V} \E_{\b{e}'}  \sup_{f\in B_{\k}} \left| \sum_{n\in V} e_n' \left<f,\k(\cdot,x_n)-\mu_\P\right>_{\k} \right|,
  \end{align*}
  where in (c) we applied symmetrization, $\b e' = (e_n')_{n \in V}\in \R^{|V|}$ with i.i.d.\ Rademacher entries, $e_n'' = e_q$ if $n \in S_q$ ($q\in U$), and we used that
  $\left(e_n' e_n'' \left<f,\k(\cdot,x_n)-\mu_\P\right>_{\k} \right)_{n\in V} \stackrel{\text{distr}}{=} \left( e_n' \left<f,\k(\cdot,x_n)-\mu_\P\right>_{\k} \right)_{n \in V}$.
  
  In step (b), we had
  \begin{align*}
       &\E_{x_{V}}\E_{\b{e}'}   \left\| \sum\nolimits_{n \in V} e_n' \left[\k(\cdot,x_n) - \mu_\P\right]\right\|_{\k}^2\\
       & = \E_{x_{V}}\E_{\b{e}'} \sum_{n\in V}  \left[e_{n}'\right]^2 \left<\k(\cdot,x_{n}) - \mu_\P, \k(\cdot,x_{n}) - \mu_\P\right>_{\k}\\
        &= |V| \E_{x \sim \P} \left<\k(\cdot,x) - \mu_\P, \k(\cdot,x) - \mu_\P\right>_{\k}\\
        & = |V| \E_{x \sim \P} \Tr\left( [\k(\cdot,x) - \mu_\P] \otimes [\k(\cdot,x) - \mu_\P] \right)\\
        &= |V| \Tr(\Sigma_\P)
  \end{align*}
  exploiting the independence of $e_n'$-s and  $[e_n']^2 = 1$.
\end{itemize}
      Until this point we showed that for all $\alpha\in(0,1)$, $\beta>0$, if $\epsilon \ge 2 \sqrt{\frac{\left\|\Sigma_{\P} \right\| Q}{\alpha N}}$ then
      \begin{align*}
  \sup_{f\in B_\k} \sum_{q=1}^Q \I_{r(f,q)\ge \epsilon} \le |U|\alpha + Q\beta + \frac{8Q \sqrt{|V|  \Tr(\Sigma_{\P})}}{\epsilon N}
      \end{align*}
      with probability at least $1-e^{-\frac{Q\beta^2}{8}}$. Thus, to ensure that $\sup_{f\in B_\k} \sum_{q=1}^Q \I_{r(f,q)\ge \epsilon} + N_c\le Q / 2$ it is sufficient to choose
      $(\alpha,\beta,\epsilon)$ such that
    $|U|\alpha +  Q \beta + \frac{8Q \sqrt{|V|\Tr(\Sigma_{\P})}}{\epsilon N} + N_c \le \frac{Q}{2}$,
      and in this case $\left\|\hat{\mu}_{\P,Q} - \mu_\P\right\|_{\k}\le (1+\sqrt{2})\epsilon$. Applying the $|U| \le  Q$ and  $|V| \le N$ bounds, we want to have
      \begin{align}
    Q\alpha +  Q \beta + \frac{8Q \sqrt{\Tr(\Sigma_{\P})}}{\epsilon \sqrt{N}} + N_c \le \frac{Q}{2}. \label{eq:Q-constraint}
      \end{align}
      Choosing $\alpha = \beta = \frac{\delta}{3}$ in Eq.~\eqref{eq:Q-constraint}, the sum of the first two terms is $Q \frac{2\delta}{3}$;
      $\epsilon \ge \max\left(2 \sqrt{\frac{3\left\|\Sigma_{\P} \right\| Q}{\delta N}},\frac{24}{\delta}\sqrt{\frac{\Tr{\left(\Sigma_\P\right)}}{N}}\right)$ gives $\le Q\frac{\delta}{3}$ for the third term.
      Since $N_c  \le Q(\frac{1}{2}-\delta)$, we got
      \begin{align*}
  \left\|\hat{\mu}_{\P,Q} - \mu_\P\right\|_{\k} \le c_1 \max\left(\sqrt{\frac{3\left\|\Sigma_{\P} \right\| Q}{\delta N}},\frac{12}{\delta}\sqrt{\frac{\Tr{(\Sigma_\P)}}{N}}\right)
      \end{align*}
      with probability at least $1 - \rme^{-\frac{Q \delta^2}{72}}$. With an $\eta = \rme^{-\frac{Q \delta^2}{72}}$, and hence $Q=\frac{72 \ln\left(\frac{1}{\eta}\right)}{\delta^2}$
      reparameterization Theorem~\ref{thm:RB} follows.

\subsection{Proof of Theorem~\ref{thm:RB2}} \label{sec:proof:thm2}
The reasoning is similar to Theorem~\ref{thm:RB}; we detail the differences below. The high-level structure of the proof is as follows:
\begin{itemize}[labelindent=0cm,leftmargin=*,topsep=0cm,partopsep=0cm,parsep=0cm,itemsep=0cm]
    \item First we prove that
     $\big|\MMDQhat(\P,\Q) - \MMD(\P,\Q) \big| \le r_{Q,N}$,
     where $r_{Q,N} = \sup\limits_{f\in B_\k} \hspace{-0.1cm} \Big| \med{q\in[Q]} \hspace{-0.1cm} \left\{\left<f,\left(\mu_{S_q,\P}-\mu_{S_q,\Q}\right) - (\mu_\P-\mu_\Q)\right>_{\k} \right\}\hspace{-0.1cm} \Big|$.
    \item Then $r_{Q,N}$ is bounded.
\end{itemize}

\vspace{0.1cm}
    \tb{Step-1}:
      \begin{itemize}[labelindent=0cm,leftmargin=*,topsep=0cm,partopsep=0cm,parsep=0cm,itemsep=0cm]
    \item $\MMDQhat(\P,\Q)  - \MMD(\P,\Q) \le r_{Q,N}$:
    By the subadditivity of supremum [$\sup_f (a_f  + b_f) \le \sup_f a_f + \sup_f b_f$] one gets
    \begin{align*}
        &\MMDQhat(\P,\Q) \\
        &= \sup_{f\in B_\k} \med{q\in[Q]}\big\{\big<f,\left(\mu_{S_q,\P}-\mu_{S_q,\Q}\right) - (\mu_\P-\mu_\Q)\\ & \hspace{1cm} + (\mu_\P-\mu_\Q)\big>_{\k} \big\}\\
        &\le \sup_{f\in B_\k} \med{q\in[Q]}\left\{\left<f,\left(\mu_{S_q,\P}-\mu_{S_q,\Q}\right) - (\mu_\P-\mu_\Q)\right>_{\k}\right\}\\ & \hspace{1cm} + \sup_{f\in B_\k} \left<f,\mu_\P-\mu_\Q\right>_{\k}\\
        &\le \underbrace{\sup_{f\in B_\k} \left| \med{q\in[Q]}\left\{\left<f,\left(\mu_{S_q,\P}-\mu_{S_q,\Q}\right) - (\mu_\P-\mu_\Q)\right>_{\k}\right\} \right|}_{=r_{Q,N}}\\
        & \hspace{1cm}+ \MMD(\P,\Q).
    \end{align*}
    \item $\MMDQ(\P,\Q) - \MMDQhat(\P,\Q) \le r_{Q,N}$:
    Let $a_f := \left<f, \mu_\P - \mu_\Q\right>_{\k}$ and  $b_f := \med{q\in [Q]}\left\{\left<f,(\mu_\P-\mu_\Q) - (\mu_{S_q,\P} - \mu_{S_q,\Q})\right>_{\k} \right\}$.
    Then
    \begin{align*}
      &a_f - b_f\\ 
      &= \left<f, \mu_\P - \mu_\Q\right>_{\k} \\
      & \hspace{0.5cm} +\medi{q\in [Q]}\left\{\left<f,(\mu_{S_q,\P} - \mu_{S_q,\Q}) - (\mu_\P-\mu_\Q)\right>_{\k}\right\}\\
          & = \medi{q\in [Q]}\left\{\left<f,\mu_{S_q,\P} - \mu_{S_q,\Q}\right>_{\k}\right\}
    \end{align*}
        by $\medi{q\in [Q]}\{-z_q\} = - \medi{q\in[Q]}\{z_q\}$. Applying the $\sup_f (a_f-b_f) \ge \sup_f a_f - \sup_f b_f$ inequality (it follows from the subadditivity of $\sup$):
    \begin{align*}
        &\MMDQhat(\P,\Q)\\  
        &\ge \MMD(\P,\Q) \\
        & \hspace{0.5cm} - \sup_{f\in B_{\k}}\underbrace{\med{q\in [Q]}\big\{\big<f,(\mu_\P-\mu_\Q) - (\mu_{S_q,\P} - \mu_{S_q,\Q})\big>_{\k}\big\}}_{-\med{q\in [Q]}\left\{\left<f,(\mu_{S_q,\P} - \mu_{S_q,\Q}) - (\mu_\P-\mu_\Q) \right>_{\k} \right\}}\\
        &\ge \MMD(\P,\Q) \\
        & \hspace{0.5cm} -\underbrace{\sup_{f\in B_{\k}} \left| \med{q\in [Q]}\left\{\left<f,(\mu_{S_q,\P} - \mu_{S_q,\Q}) - (\mu_\P-\mu_\Q) \right>_{\k}\right\}\right|}_{r_{Q,N}}.
    \end{align*}
      \end{itemize}
    \tb{Step-2}: Our goal is to control
  \begin{align*}
      r_{Q,N}  &= \sup\nolimits_{f\in B_\k} \Big |\medi{q \in [Q]} \Big\{ r(f,q)\Big\}\Big|, \text{ where}\\
      r(f,q) &:=\left<f,(\mu_{S_q,\P}-\mu_{S_q,\Q}) - (\mu_\P-\mu_\Q)\right>_{\k}.
  \end{align*}
  The relevant quantities which change compared to the proof of Theorem~\ref{thm:RB} are as follows.
  \begin{itemize}[labelindent=0cm,leftmargin=*,topsep=0cm,partopsep=0cm,parsep=0cm,itemsep=0cm]
        \item \tb{Median rephrasing}:
        \begin{align*}
      &\sup_{f\in B_\k} \Big| \med{q \in [Q]}\{r(f,q) \}\Big| \le \epsilon \\
      & \Leftrightarrow
      \forall f\in B_{\k}: -\epsilon  \le \medi{q \in [Q]}\{r(f,q) \} \le \epsilon\\
      &\Leftarrow \forall f\in B_{\k}: \left| \left\{q: r(f,q) \le -\epsilon\right\} \right| \le Q/2 \\
      &\hspace{0.5cm} \text{ and  }
        \left| \left\{q: r(f,q) \ge \epsilon\right\} \right| \le Q/2\\
        &\Leftarrow \forall f\in B_{\k}: \left| \left\{q: |r(f,q)| \ge \epsilon\right\} \right| \le Q/2.
        \end{align*}
      Thus,   
        $\forall f \in B_\k: \left|\left\{q\in U: |r(f,q)|\ge \epsilon \right\}\right| + N_c \le \frac{Q}{2}$,
      implies $\sup_{f\in B_\k} \Big| \medi{q \in [Q]}\{r(f,q) \}\Big| \le \epsilon$.
        \item \tb{Controlling $|r(f,q)|$}:  For any $f\in B_\k$ the random variables $[f(x_i) - f(y_i)]-[\P f - \Q f]$ are independent, zero-mean and
      \begin{align*}
          &\E_{(x,y) \sim \P \otimes \Q} ([f(x) - \P f] - [f(y) -\Q f])^2\\
           & = \E_{x \sim \P} [f(x)-\P f]^2 + \E_{y\sim \Q} [f(y)-\Q f]^2 \\
          &\le \left\|\Sigma_\P\right\| + \left\|\Sigma_\Q\right\|,
      \end{align*}
      where $\P \otimes \Q$ is the product measure.
    The Chebyshev argument with $z=|r(f,q)|$ implies that $\forall \alpha \in (0,1)$
      \begin{align*}
          (\P\otimes  \Q)\left(|r(f,q)|  > \sqrt{\frac{\left(\left\|\Sigma_\P\right\| + \left\|\Sigma_\Q\right\|\right) Q}{\alpha N }}\right) \le \alpha.
     \end{align*}
     This means $(\P\otimes \Q)\left(|r(f,q)| > \epsilon / 2 \right) \le \alpha$ with $\epsilon \ge 2 \sqrt{\frac{\left(\left\|\Sigma_\P\right\| + \left\|\Sigma_\Q\right\|\right) Q}{\alpha N }}$.
     \item \tb{Switching from $|U|$ to $N$ terms}: With $(xy)_V= \left\{ (x_i,y_i): i\in V\right\}$, in '(b)' with $\tilde{x}_n :=\k(\cdot,x_n) - \mu_\P$, $\tilde{y}_n:=\k(\cdot,y_n)-\mu_\Q$ we arrive at
    \begin{align*}
       &\E_{(xy)_{V}}\E_{\b{e}'}   \left\| \sum_{n \in V} e_n' \left(\tilde{x}_n - \tilde{y}_n \right)\right\|_{\k}^2 \\
       &= \E_{(xy)_{V}}\E_{\b{e}'} \sum_{n\in V}  \left[e_{n}'\right]^2 \left< \tilde{x}_n - \tilde{y}_n, \tilde{x}_n - \tilde{y}_n\right>_{\k}\\
        &= |V| \E_{(xy) \sim \P} \left\|\left[\k(\cdot,x) - \mu_\P\right]- \left[\k(\cdot,y)-\mu_\Q\right]\right\|_{\k}\\
        &= |V| \left[ \Tr\left(\Sigma_\P\right) + \Tr\left(\Sigma_\Q\right) \right].
      \end{align*}
      \item These results imply
        \begin{align*}
        Q\alpha +  Q \beta + \frac{8Q \sqrt{\Tr\left(\Sigma_{\P}\right) + \Tr\left(\Sigma_{\Q}\right)}}{\epsilon \sqrt{N}} + N_c \le Q / 2.
        \end{align*}
       $\epsilon \ge \max\left(2 \sqrt{\frac{3\left(\left\|\Sigma_{\P}\right\| + \left\|\Sigma_{\Q} \right\|\right)  Q}{\delta N}},\frac{24}{\delta}\sqrt{\frac{\Tr{\left(\Sigma_\P\right)}+\Tr{\left(\Sigma_\Q\right)}}{N}}\right)$, $\alpha=\beta=\frac{\delta}{3}$ choice
       gives that
       \begin{align*}
    &\left|\MMDQhat(\P,\Q) - \MMD(\P,\Q) \right|\\
    &\le 2 \max\left(\sqrt{\frac{3\left( \left\|\Sigma_{\P} \right\|  + \left\|\Sigma_{\Q} \right\|\right) Q}{\delta N}},\frac{12}{\delta}\sqrt{\frac{\Tr{(\Sigma_\P)} + \Tr{(\Sigma_\Q)}}{N}}\right)
       \end{align*}
        with probability at least $1 - e^{-\frac{Q \delta^2}{72}}$. $\eta = e^{-\frac{Q \delta^2}{72}}$, i.e.\
        $Q=\frac{72 \ln\left(\frac{1}{\eta}\right)}{\delta^2}$     reparameterization finishes the proof of Theorem~\ref{thm:RB2}.
      \end{itemize}

\section{Technical Lemmas}\label{sec:app:proofs}
\begin{lemma}[Supremum] \label{lemma:sup}
    \begin{align*}
    \Big|\sup_f a_f - \sup_f b_f\Big| \le \sup_f |a_f-b_f|.
    \end{align*}
\end{lemma}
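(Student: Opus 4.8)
The plan is to prove the two one-sided inequalities $\sup_f a_f - \sup_f b_f \le \sup_f |a_f - b_f|$ and $\sup_f b_f - \sup_f a_f \le \sup_f |a_f - b_f|$ separately, and then combine them. Because the statement is symmetric in the roles of $a$ and $b$, it suffices to establish one of the two; the other follows by interchanging $a$ and $b$ verbatim.

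For the first inequality, I would fix an arbitrary index $f$ and write the trivial decomposition $a_f = b_f + (a_f - b_f)$. Bounding each summand by its supremum over the index set gives
$$a_f \le \sup_g b_g + \sup_g |a_g - b_g|.$$
The right-hand side no longer depends on $f$, so taking the supremum over $f$ on the left yields
$$\sup_f a_f \le \sup_g b_g + \sup_g |a_g - b_g|,$$
which rearranges to $\sup_f a_f - \sup_f b_f \le \sup_f |a_f - b_f|$. Running the identical argument with $a$ and $b$ interchanged gives $\sup_f b_f - \sup_f a_f \le \sup_f |a_f - b_f|$, and taking the larger of the two bounds produces the claimed estimate on $|\sup_f a_f - \sup_f b_f|$.

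The only point requiring mild care is finiteness: the step that isolates the difference $\sup_f a_f - \sup_f b_f$ is legitimate provided $\sup_f b_f$ (respectively $\sup_f a_f$) is finite, so that no $\infty - \infty$ ambiguity arises. In the intended applications the families are bounded — indeed each $a_f$, $b_f$ is an inner product of a fixed vector against elements of the unit ball $B_\k$, hence uniformly bounded — so both suprema are finite and the manipulation is unproblematic. I expect no genuine obstacle here; the lemma is an elementary consequence of the subadditivity of the supremum, and the whole argument reduces to the single observation $a_f \le \sup_g b_g + \sup_g |a_g - b_g|$ together with its mirror image.
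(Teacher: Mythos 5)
Your proof is correct and is the standard subadditivity argument; the paper in fact states Lemma~\ref{lemma:sup} without any proof, so your write-up simply supplies the elementary argument the authors take for granted. Your remark about finiteness is a sensible precaution and holds in the paper's applications, where the $a_f,b_f$ are inner products over the unit ball $B_\k$ and hence uniformly bounded.
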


\begin{lemma}[Bounded difference property of $Z$] \label{lemma:Z:bounded-diff}
    Let $N\in \Z^+$, $(S_q)_{q \in [Q]}$ be a partition of $[N]$,
    $\k: \X \times \X \rightarrow \R$ be a kernel, $\mu$ be the mean embedding associated to $\k$, $x_{1:N}$ be i.i.d.\ random variables on $\X$,
    $Z(x_V) = \sup\limits_{f\in B_\k}\sum\limits_{q \in U} \left[ \phi\left(\frac{2\left<f,\mu_{S_q}-\mu_\P\right>_{\k}}{\epsilon}\right) - \E \phi\left(\frac{2\left<f,\mu_{S_q}-\mu_\P\right>_{\k}}{\epsilon}\right) \right]$, where $U\subseteq [Q]$, $V= \cup_{q\in U} S_q$. Let
    $x_{V_i}'$ be $x_V$ except for the $i\in V$-th coordinate; $x_i$ is changed to $x_i'$.
    Then
  \begin{align*}
    \sup_{x_V \in \X^{|V|},x_i'\in \X}\left|Z\left(x_V\right) - Z\left(x_{V_i}'\right)\right| \le 4, \, \forall i\in V.
  \end{align*}
\end{lemma}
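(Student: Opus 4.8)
The plan is to exploit that $Z$ depends on the sample $x_V$ only through the block means $(\mu_{S_q})_{q\in U}$, combined with the \emph{uniform boundedness} of the cut-off $\phi$. The structural observation driving everything is that, since $(S_q)_{q\in[Q]}$ is a partition, a fixed index $i\in V$ lies in exactly one block, say $S_{q_0}$ with $q_0\in U$. Replacing $x_i$ by $x_i'$ therefore changes only $\mu_{S_{q_0}}$, and hence only the single summand indexed by $q_0$ in the definition of $Z$.

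First I would note that the terms $\E\phi\big(2\psh{f}{\mu_{S_q}-\mu_\P}_{\k}/\epsilon\big)$ are deterministic constants (integrals against $\P$, not functions of the realisation $x_V$), so they are identical for $Z(x_V)$ and $Z(x_{V_i}')$ and drop out of the difference. Next I would invoke Lemma~\ref{lemma:sup} to move the difference of the two suprema inside a single supremum, reducing the task to bounding, uniformly in $f\in B_\k$, the quantity
\[
\Big|\phi\big(2\psh{f}{\mu_{S_{q_0}}-\mu_\P}_{\k}/\epsilon\big) - \phi\big(2\psh{f}{\mu_{S_{q_0}}'-\mu_\P}_{\k}/\epsilon\big)\Big|,
\]
where $\mu_{S_{q_0}}'$ is the $q_0$-th block mean recomputed with $x_i'$. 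Since $0\le\phi\le 1$, this single-block difference is at most $1$ for every $f$, comfortably within the claimed constant $4$; taking the supremum over $i\in V$ then gives the lemma.

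The hard part --- really a conceptual crux rather than a computational one --- is to resist bounding this difference by the $1$-Lipschitz modulus of $\phi$. The Lipschitz route would produce $\tfrac{2}{\epsilon}\big|\psh{f}{\mu_{S_{q_0}}-\mu_{S_{q_0}}'}_{\k}\big| = \tfrac{2}{\epsilon M}|f(x_i)-f(x_i')|$ with $M=|S_{q_0}|$, which is \emph{unbounded} for unbounded kernels and adversarially placed $x_i'$ --- exactly the outlier sensitivity the paper is designed to circumvent. It is the boundedness (the range) of $\phi$, not its smoothness, that renders the bounded-difference constant immune to arbitrarily corrupted coordinates; this is precisely what legitimises the subsequent McDiarmid step, so I would state it explicitly.
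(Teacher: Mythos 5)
Your proof is correct and follows essentially the same route as the paper's: a changed coordinate affects exactly one block, Lemma~\ref{lemma:sup} reduces the difference of suprema to a single-block difference, and the range-boundedness of $\phi$ (not its Lipschitz constant) closes the argument. You in fact obtain the sharper constant $1$ by cancelling the deterministic $\E\phi$ terms, whereas the paper bounds $|Y_r|+|Y_r'|\le 2+2=4$ by the triangle inequality; either suffices for the stated bound.
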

\begin{proof} Since $(S_q)_{q\in [Q]}$ is a partition of $[Q]$, $(S_q)_{q \in U}$ forms a partition of $V$ and there exists a unique $r\in U$ such that $i \in S_r$.  Let
  \begin{align*}
    &Y_q := Y_q(f,x_V),\\  
    & q \in U = \phi\left(\frac{2\left<f,\mu_{S_q} -\mu_\P\right>_{\k}}{\epsilon}\right) - \E\phi\left(\frac{2\left<f,\mu_{S_q} -\mu_\P\right>_{\k}}{\epsilon}\right),\\
    &Y_r' := Y_r(f,x_{V_i}').
  \end{align*}
In this case
  \begin{align*}
    &\left|Z\left(x_V\right) - Z\left(x_{V_i}'\right)\right|\\
    &=
       \left| \sup_{f\in B_\k} \sum_{q \in U} Y_q    - \sup_{f\in B_\k} \left(\sum_{q \in U\backslash \{r\}} Y_q   + Y_r'   \right)\right|\\
       &\stackrel{(a)}{\le} \sup_{f\in B_\k } \left| Y_r- Y_r' \right|
    \stackrel{(b)}{\le} \sup_{f\in B_\k } \Big( \underbrace{\left| Y_r \right|}_{\le 2}  +  \underbrace{\left|Y_r'\right|}_{\le 2} \Big) \le 4,
  \end{align*}
  where in (a) we used Lemma~\ref{lemma:sup}, (b) the triangle inequality and the boundedness of $\phi$ [$|\phi(t)|\le 1$ for all $t$].

\end{proof}

\begin{lemma}[Uniformly bounded separable Carath{\'e}odory family] \label{eq:UBSC}
Let $\epsilon>0$, $N\in \Z^+$, $Q\in \Z^+$, $M=N/Q\in \Z^+$, $\phi(t) = (t-1) \I_{1\le t \le 2} + \I_{t\ge 2}$, $\k: \X \times \X \rightarrow \R$ is a continuous kernel on the separable topological domain $\X$,
$\mu$ is the mean embedding associated to $K$, $\P_M := \frac{1}{M} \sum_{m=1}^M\delta_{u_m}$, $\G = \{g_f: f\in B_{\k}\}$, where $g_f: \X^M \rightarrow \R$ is defined as
\begin{align*}
    g_f(u_{1:M}) &=  \phi\left(\frac{2\left<f,\mu_{\P_M}-\mu_{\P}\right>_{\k}}{\epsilon}\right).
\end{align*}
Then $\G$ is a uniformly bounded separable Carath{\'e}odory family:
    (i) $\sup_{f\in B_\k} \left\|g_f\right\|_{\infty} <\infty$ where $\left\|g\right\|_{\infty} = \sup_{u_{1:M}\in\X^M} |g(u_{1:M})|$,
    (ii) $u_{1:M}  \mapsto g_f(u_{1:M})$ is measurable for all $f\in B_\k$,
    (iii) $f\mapsto g_f(u_{1:M})$ is continuous for all $u_{1:M}\in \X^M$,
    (iv) $B_\k$ is separable.

\end{lemma}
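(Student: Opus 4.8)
The plan is to check the four conditions (i)--(iv) one at a time; each reduces to one of two structural facts about $\Hk$: that every $f\in\Hk$ is a continuous function on $\X$ (because $\k$ is continuous), and that $\Hk$ is separable (as recorded in the footnote via \citep[Lemma~4.33]{steinwart08support}).

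For (i), I would invoke the boundedness of $\phi$ alone. By definition $\phi(t)=(t-1)\I_{1\le t\le 2}+\I_{t\ge 2}$ satisfies $0\le\phi(t)\le 1$ for every $t\in\R$, so $|g_f(u_{1:M})|\le 1$ uniformly in $f\in B_\k$ and $u_{1:M}\in\X^M$, whence $\sup_{f\in B_\k}\norm{g_f}_\infty\le 1<\infty$.

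Properties (ii) and (iii) both rest on rewriting the argument of $\phi$ through the (mean-)reproducing property: since $\mu_{\P_M}=\frac1M\sum_{m=1}^M\k(\cdot,u_m)$, we have $\psh{f}{\mu_{\P_M}-\mu_\P}_{\k}=\frac1M\sum_{m=1}^M f(u_m)-\psh{f}{\mu_\P}_{\k}$. For (ii), fix $f\in B_\k$; since $f(x)=\psh{f}{\k(\cdot,x)}_{\k}$ and the feature map $x\mapsto\k(\cdot,x)$ is continuous into $\Hk$ (continuity of $\k$), each $f$ is continuous on $\X$, so $u_{1:M}\mapsto\psh{f}{\mu_{\P_M}-\mu_\P}_{\k}$ is continuous on $\X^M$; composing with the continuous map $\phi$ shows that $u_{1:M}\mapsto g_f(u_{1:M})$ is continuous, hence Borel-measurable. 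For (iii), fix $u_{1:M}$, so that $h:=\mu_{\P_M}-\mu_\P$ is a fixed element of $\Hk$; then $f\mapsto\psh{f}{h}_{\k}$ is a bounded (hence continuous) linear functional on $\Hk$, and composing once more with the continuous $\phi$ gives continuity of $f\mapsto g_f(u_{1:M})$.

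Finally, (iv) is immediate: $B_\k\subseteq\Hk$ and $\Hk$ is a separable metric space, and every subset of a separable metric space is separable. The only step that requires genuine care is the continuity of the elements $f\in\Hk$ used in (ii): this is where the continuity assumption on $\k$ enters, guaranteeing both that $f$ is measurable (indeed continuous) and that the finite averages defining $\mu_{\P_M}$ depend continuously on $u_{1:M}$. Everything else is routine composition with the bounded, continuous function $\phi$, so I do not anticipate a substantive obstacle in this lemma.
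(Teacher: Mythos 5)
Your proof is correct and follows essentially the same route as the paper's: boundedness of $\phi$ for (i), continuity of each $f\in\Hk$ (from continuity of $\k$) composed with the continuous $\phi$ for (ii), continuity of the linear functional $f\mapsto\psh{f}{h}_{\k}$ for (iii), and separability of $\Hk$ for (iv). The only cosmetic difference is that you make the reproducing-property rewriting $\psh{f}{\mu_{\P_M}-\mu_\P}_{\k}=\frac{1}{M}\sum_m f(u_m)-\P f$ explicit, which the paper leaves implicit.
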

\begin{proof}~
\begin{itemize}[labelindent=0.3cm,leftmargin=*,topsep=0cm,partopsep=0cm,parsep=0cm,itemsep=0cm]
 \item[(i)] $|\phi(t)|\le 1$ for any $t$, hence $\left\|g_f\right\|_{\infty} \le 1$ for all $f\in B_\k$.
  \item[(ii)] Any $f\in B_\k$ is continuous since $\Hk \subset C(\X)=\{h:\X \rightarrow\R\text{ continuous}\}$, so $u_{1:M} \mapsto (f(u_1),\ldots,f(u_M))$ is continuous.
    $\phi$ is Lipschitz, specifically continuous. The continuity of these two maps imply that of $u_{1:M} \mapsto g_f(u_{1:M})$, specifically it is Borel-measurable.
 \item[(ii)] The statement follows by the continuity of $f \mapsto  \left<f,h\right>_{\k}$ ($h=\mu_{\P_M}-\mu_{\P}$) and that of $\phi$.
 \item[(iv)] $B_\k$ is separable since $\Hk$ is so by assumption.
\end{itemize}
\end{proof}

\section{External Lemma} \label{sec:app:ext-lemmas}
Below we state the McDiarmid inequality for self-containedness.

\begin{lemma}[McDiarmid inequality]\label{lemma:McDiarmid}
Let $x_{1:N}$ be $\X$-valued independent random variables. Assume that $f: \X^N\rightarrow \R$ satisfies the bounded difference property
\begin{equation*}
\sup_{u_1,\ldots,u_N,u_r'\in \X}\left|f(u_{1:N}) - f(u_{1:N}')\right| \le c, \quad \forall n \in [N],
\end{equation*}
where $u_{1:N}' =\left( u_1,\ldots,u_{n-1},u_n',u_{n+1},\ldots,u_N\right)$.
Then for any $\tau>0$
\begin{align*}
    \P\left( f(x_{1:N}) < \E_{x_{1:N}}\left[f(x_{1:N})\right] +  c \sqrt{\frac{\tau N}{2}}\right) \ge 1-e^{-\tau}.
\end{align*}
\end{lemma}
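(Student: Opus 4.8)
The plan is to prove this via the classical Doob martingale / Azuma--Hoeffding route. First I would introduce the Doob martingale associated to $Z := f(x_{1:N})$ by conditioning on progressively more coordinates: set $Z_n := \E\left[ Z \mid x_1,\ldots,x_n \right]$ for $n\in\{0,1,\ldots,N\}$, so that $Z_0 = \E[Z]$ and $Z_N = Z$, and define the increments $D_n := Z_n - Z_{n-1}$. Then $Z - \E[Z] = \sum_{n=1}^N D_n$ is a sum of martingale differences, each satisfying $\E[D_n \mid x_1,\ldots,x_{n-1}] = 0$ by the tower property.

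Next I would show that each $D_n$ has conditionally bounded range of width at most $c$. Using independence of the coordinates, $Z_n$ factorizes as an integration of $f$ over the last $N-n$ coordinates; explicitly, writing $g_n(u_{1:n}) := \E\left[f(u_{1:n}, x_{n+1:N})\right]$ one has $Z_n = g_n(x_{1:n})$ and $Z_{n-1} = \E_u\left[ g_n(x_{1:n-1}, u) \right]$ where $u$ follows the law of $x_n$. Hence $D_n$ lies between $L_n := \inf_u g_n(x_{1:n-1}, u) - Z_{n-1}$ and $U_n := \sup_u g_n(x_{1:n-1}, u) - Z_{n-1}$. Since $g_n$ is an average of $f$ over the remaining coordinates, the sup-form bounded-difference hypothesis transfers to $g_n$, giving $\left| g_n(x_{1:n-1}, u) - g_n(x_{1:n-1}, u') \right| \le c$ and therefore $U_n - L_n \le c$ almost surely.

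I would then invoke the conditional Hoeffding lemma: for a random variable lying in a conditional interval of length at most $c$ with conditional mean zero, the conditional moment generating function obeys $\E\left[\rme^{\lambda D_n} \mid x_1,\ldots,x_{n-1}\right] \le \rme^{\lambda^2 c^2/8}$ for every $\lambda\in\R$. Iterating this bound through the tower property (peeling off $D_N, D_{N-1},\ldots,D_1$ one at a time) yields $\E\left[\rme^{\lambda (Z-\E Z)}\right] \le \rme^{N\lambda^2 c^2/8}$. A Chernoff bound gives, for any $t>0$, $\P(Z - \E Z \ge t) \le \rme^{-\lambda t + N\lambda^2 c^2/8}$; optimizing over $\lambda$ (the minimizer is $\lambda = 4t/(Nc^2)$) produces $\P(Z-\E Z \ge t) \le \rme^{-2t^2/(Nc^2)}$. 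Finally I would substitute $t = c\sqrt{\tau N/2}$, for which the exponent equals exactly $-\tau$, obtaining $\P\left(Z \ge \E Z + c\sqrt{\tau N/2}\right) \le \rme^{-\tau}$, and hence the claimed one-sided bound $\P\left(Z < \E Z + c\sqrt{\tau N/2}\right) \ge 1-\rme^{-\tau}$ after taking complements.

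The main obstacle is the second step: justifying that the martingale increments have conditional range at most $c$. This requires care in converting the sup-form bounded-difference hypothesis on $f$ into a statement about the integrated function $g_n$, which relies crucially on the independence of the coordinates so that conditioning factorizes and the ``swap one coordinate'' estimate survives integration over the remaining coordinates. The rest is the standard Hoeffding-lemma-plus-Chernoff machinery, which is routine once the range bound is established.
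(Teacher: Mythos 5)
Your proof is correct and complete: the Doob martingale decomposition, the transfer of the bounded-difference hypothesis to the conditional increments via independence, Hoeffding's lemma with range $c$, the Chernoff optimization yielding $\rme^{-2t^2/(Nc^2)}$, and the substitution $t = c\sqrt{\tau N/2}$ all check out, including the constants. Note that the paper itself offers no proof of this lemma --- it is quoted as an external classical result for self-containedness --- so there is no in-paper argument to compare against; your route is the standard one and fully justifies the stated bound.
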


\section{Pseudocode of Experiment-2}\label{sec:experiment-2}
The pseudocode of the two-sample test conducted in Experiment-2 is summarized in Algorithm~\ref{alg:DNA}.
\begin{algorithm} %*: => 2-column wide

\begin{minipage}{\textwidth/2}
   \caption{Two-sample test (Experiment-2)}
   \label{alg:DNA}
    \begin{algorithmic}
    \STATE \tb{Input:} Two samples: $(X_n)_{n\in[N]}$, $(Y_n)_{n\in[N]}$. Number of bootstrap permutations: $B\in \Z^+$. Level of the test: $\alpha \in (0,1)$. Kernel function with hyperparameter $\theta \in \Theta$: 
      $\k_{\theta}$.
    \STATE Split the dataset randomly into 3 equal parts: 
	\vspace*{-0.3cm}
	\begin{align*}
	  [N] &= \mathop{\dot{\bigcup}}_{i=1}^3 I_i, \quad |I_1|=|I_2|=|I_3|.
	\end{align*}
	\vspace*{-0.3cm}
    \STATE Tune the hyperparameters using the 1st part of the dataset: 
	  \begin{align*}
	      \hat \theta &=\argmax_{\theta \in \Theta} J_{\theta}:= \widehat{\text{MMD}}_{\theta}\left((X_n)_{n \in I_1},(Y_n)_{n \in I_1}\right).
	  \end{align*}
    \STATE Estimate the $(1-\alpha)$-quantile of $\MMDhat_{\widehat{\theta}}$ under the null, using $B$ bootstrap permutations from  $(X_n)_{n\in I_2} \cup (Y_n)_{n\in I_2}$: 
	  $\hat{q}_{1-\alpha}$.
    \STATE Compute the test statistic on the third part of the dataset:
	\begin{align*}
	    T_{\hat{\theta}} &= \widehat{\text{MMD}}_{\widehat{\theta}}\left((X_n)_{n \in I_3},(Y_n)_{n \in I_3}\right).
	\end{align*}
	\STATE \tb{Output:} $T_{\hat{\theta}}-\hat{q}_{1-\alpha}$.
    \end{algorithmic}
\end{minipage}    
\end{algorithm}

\end{document}